\documentclass{article}

     \PassOptionsToPackage{numbers, compress}{natbib}


    \usepackage[final]{neurips_2025}



\usepackage[utf8]{inputenc} 
\usepackage[T1]{fontenc}    
\usepackage{hyperref}       
\usepackage{url}            
\usepackage{booktabs}       
\usepackage{amsfonts}       
\usepackage{nicefrac}       
\usepackage{microtype}      
\usepackage{xcolor}         

\usepackage{caption}
\usepackage{subcaption}
\usepackage{hyperref}
\usepackage{amsmath}
\usepackage{amssymb}
\usepackage{mathtools}
\usepackage{amsthm}
\usepackage{xfrac}
\usepackage{mathtools}
\usepackage{wrapfig}
\usepackage{tikz}
\usepackage{algorithm}
\usepackage{algorithmic}
\newcommand{\email}[1]{%
  \begingroup
  \hypersetup{hidelinks} 
  \href{mailto:#1}{\texttt{#1}}%
  \endgroup
}

\DeclarePairedDelimiterX{\infdivx}[2]{(}{)}{%
  #1\parallel #2%
}

\usepackage{cancel}
\usepackage{adjustbox}

\theoremstyle{plain}
\newtheorem{theorem}{Theorem}[section]

\newtheorem{lemma}[theorem]{Lemma}

\theoremstyle{definition}
\newtheorem{definition}[theorem]{Definition}

\theoremstyle{remark}

\usepackage[american]{babel}
\usepackage{amsfonts}
\usepackage{graphicx}
\usepackage{url}
\usepackage{bm}
\usepackage{cancel}
\usepackage{comment}
\usepackage{natbib}
\usepackage{enumitem}
\usepackage{etoolbox}
\usepackage{subcaption}
\BeforeBeginEnvironment{wrapfigure}{\setlength{\intextsep}{0pt}}

\DeclareMathOperator*{\argmax}{arg\,max}

\setlength{\belowcaptionskip}{-10pt}

\title{Autoencoding Random Forests}

%

\author{%
Binh Duc Vu$^{1}$\thanks{Equal contribution.}\\
\email{binh.vu@kcl.ac.uk}
\And
Jan Kapar$^{2,3*}$\\
\email{kapar@leibniz-bips.de}
\AND
Marvin N. Wright$^{2,3}$\\
\email{wright@leibniz-bips.de}
\And
David S. Watson$^{1}$\\
\email{david.watson@kcl.ac.uk}
\AND
\normalfont
$^1$King's College London\\
$^2$Leibniz Institute for Prevention Research and Epidemiology -- BIPS\\ $^3$University of Bremen
}

\begin{document}

\maketitle

\begin{abstract}
  We propose a principled method for autoencoding with random forests. 
  Our strategy builds on foundational results from nonparametric statistics and spectral graph theory to learn a low-dimensional embedding of the model that optimally represents relationships in the data. 
  We provide exact and approximate solutions to the decoding problem via constrained optimization, split relabeling, and nearest neighbors regression.
  These methods effectively invert the compression pipeline, establishing a map from the embedding space back to the input space using splits learned by the ensemble's constituent trees.
  The resulting decoders are universally consistent under common regularity assumptions.
  The procedure works with supervised or unsupervised models, providing a window into conditional or joint distributions. We demonstrate various applications of this autoencoder, including powerful new tools for visualization, compression, clustering, and denoising. Experiments illustrate the ease and utility of our method in a wide range of settings, including tabular, image, and genomic data.
\end{abstract}

\setcounter{footnote}{0}
\section{Introduction}\label{sect:intro}
Engineering compact, informative representations is central to many learning tasks \citep{hutter2004universal, vincent_denoising, bengio_representation, Goodfellow-et-al-2016, scholkopf_causal_learning}. In supervised applications, it can simplify regression or classification objectives, helping users better understand the internal operations of large, complicated models \citep{ghorbani_concepts, yeh2020}. In reinforcement learning, embeddings help agents navigate complex environments, imposing useful structure on a potentially high-dimensional state space \citep{arulkumaran2017, ladosz2022}. In unsupervised settings, latent projections can be used for data compression \citep{mackay2003}, visualization \citep{vandermaaten_tsne}, clustering \citep{vonluxburg2007}, and generative modeling \citep{tomczak_generative}. 

The current state of the art in representation learning is dominated by deep neural networks (DNNs). Indeed, the tendency of these algorithms to learn rich embeddings is widely cited as a key component of their success \citep{Goodfellow-et-al-2016}, with some even arguing that large language models are essentially compression engines \citep{deletang2024language}. 
It is less obvious how to infer latent factors from tree-based ensembles such as random forests (RFs) \citep{Breiman2001}, a popular and flexible function class widely used in areas like bioinformatics \citep{Chen2012} and econometrics \citep{Athey2019}. DNNs are known to struggle in tabular settings with mixed continuous and categorical covariates, where tree-based ensembles typically match or surpass their performance \citep{shwartz_tabular, grinsztajn2022why}. Though several authors have proposed methods for computing nonlinear embeddings with RFs (see Sect. \ref{sect:related}), these approaches tend to be heuristic in nature. Moreover, the task of decoding latent vectors to recover input data in these pipelines remains unresolved.

We propose a novel, principled method for autoencoding with RFs. Our primary contributions are: 
(1) We prove several important properties of the adaptive RF kernel, including that it is asymptotically universal. 
(2) These results motivate the use of diffusion maps to perform nonlinear dimensionality reduction and manifold learning with RFs. 
Resulting embeddings can be used for various downstream tasks. 
(3) We introduce and study multiple methods for decoding spectral embeddings back into the original input space, including exact and approximate solutions based on constrained optimization, split relabeling, and nearest neighbors regression. 
(4) We apply these methods in a series of experiments and benchmark against a wide array of neural and tree-based alternatives. Our results demonstrate that the RF autoencoder is competitive with the state of the art across a range of tasks including data visualization, compression, clustering, and denoising.

The remainder of this paper is structured as follows. After a review of background material and related work (Sect. \ref{sect:related}), we propose and study methods for encoding (Sect. \ref{sect:encoding}) and decoding (Sect. \ref{sect:decoding}) data with RFs. Performance is illustrated in a series of experiments (Sect. \ref{sect:exp}). Following a brief discussion (Sect. \ref{sect:discussion}), we conclude with directions for future work (Sect. \ref{sect:conclusion}).

\section{Background}\label{sect:related}

Our starting point is the well established connection between RFs and kernel methods \citep{breiman2000, davies_rf_kernel, scornet_kernel}. 
The basic insight is that classification and regression trees (CART) \citep{breiman1984}, which serve as basis functions for many popular ensemble methods, are a kind of adaptive nearest neighbors algorithm \citep{lin2006}. At the root of the tree, all samples are connected. Each split severs the link between one subset of the data and another (i.e., samples routed to left vs. right child nodes), resulting in a gradually sparser graph as depth increases. At completion, a sample's ``neighbors'' are just those datapoints that are routed to the same leaf.
Given some feature space $\mathcal X \subset \mathbb R^{d_\mathcal{X}}$, the implicit kernel of tree $b$, $k^{(b)}: \mathcal X \times \mathcal X \mapsto \{0,1\}$, is an indicator function that evaluates to $1$ for all and only neighboring sample pairs.\footnote{This ignores certain subtleties that arise when trees are grown on bootstrap samples, in which case $k^{(b)}$ may occasionally evaluate to larger integers. For present purposes, we presume that trees are grown on data subsamples; see Appx. \ref{appx:proofs}.}
This base kernel can be used to define different ensemble kernels. 
For instance, taking an average over $B>1$ trees, we get a kernel with a simple interpretation as the proportion of trees in which two samples colocate: $k^S(\mathbf x, \mathbf x') = B^{-1} \sum_{b=1}^B k^{(b)}(\mathbf x, \mathbf x')$. We call this the \textit{Scornet kernel} after one of its noted proponents \citep{scornet_kernel}, who showed that $k^S$ is provably close in expectation to the RF similarity function:  
\begin{align}\label{eq:rf_kernel}
    k_n^{RF}(\mathbf x, \mathbf x') = \frac{1}{B} \sum_{b=1}^B \bigg( \frac{k^{(b)}(\mathbf x, \mathbf x')}{\sum_{i=1}^n k^{(b)}(\mathbf x, \mathbf x_i)} \bigg),
\end{align}
where $i \in [n] := \{1, \dots, n\}$ indexes the training samples.
This represents the average of \textit{normalized} tree kernels, and fully encodes the information learned by the RF $f_n$ via the identity:
\begin{align}\label{eq:rf_formula}
    f_n(\mathbf x) = \sum_{i=1}^n k^{RF}_n(\mathbf x, \mathbf x_i) ~y_i,
\end{align}
which holds uniformly for all $\mathbf x \in \mathcal X$. 
Though $k^S$ is sometimes referred to as ``the random forest kernel'' \citep{davies_rf_kernel, panda2018}, this nomenclature is misleading---only $k^{RF}_n$ satisfies Eq. \ref{eq:rf_formula}. 
Non-adaptive variants of $k^{RF}_n$ have been previously studied \citep{zhou2022boulevard}, but we derive several novel properties of this similarity function without any such constraints. 


Several nonlinear dimensionality reduction techniques are based on kernels, most notably kernel principal component analysis (KPCA) \citep{scholkopf_kpca}. We focus in particular on diffusion maps \citep{coifman_pnas, coifman_diffusion}, which can be interpreted as a form of KPCA \citep{ham2003}. \citet{bengio_learning_2004} establish deep links between these algorithms and several related projection methods, demonstrating how to embed test data in all settings via the Nystr{\"o}m formula, a strategy we adopt below. Inverting any KPCA algorithm to map latent vectors to the input space is a nontrivial task that must be tailored to each specific kernel. 
For an example with Gaussian kernels, see \citep{mika1998}. 

Previous authors have explored feature engineering with RFs.
\citet{Shi2006} perform multidimensional scaling on a dissimilarity matrix extracted from supervised and unsupervised forests. However, they do not explore the connections between this approach and kernel methods, nor do they propose any strategy for decoding latent representations. 
More heuristic approaches involve running PCA on a weighted matrix of all forest nodes (not just leaves), a method that works well in some experiments but comes with no formal guarantees \citep{pliakos2018}. 


Several generative algorithms based on RFs have been proposed, building on the probabilistic circuit literature \citep{Correia2020, watson_arf}. These do not involve any explicit encoding step, although the models they train could be passed through our pipeline to extract latent embeddings (see Sect. \ref{sect:exp}). Existing methods for sum-product network encoding could in principle be applied to an RF following compilation into a corresponding circuit \citep{Vergari_spn_var}. However, these can often \textit{increase} rather than \textit{decrease} dimensionality, and do not come with any associated decoding procedure.

Recent work has explored autoencoding for tree- and forest-based models. \citet{Carreira2024autoencTrees} introduce a sparse oblique tree where each node performs a local linear reconstruction via PCA, yielding a tree-structured autoencoder without use of kernels or ensembles. \citet{aumon2025RF-AE} propose an RF-informed neural autoencoder, which extracts forest-induced sample similarities to inject supervised signals into network embeddings for data visualization. 
By contrast, we learn latent representations directly from the RF and implement explicit decoding rules for both supervised and unsupervised settings without an auxiliary model.

Perhaps the most similar method to ours, in motivation if not in execution, is Feng and Zhou's encoder forest (eForest) \citep{feng_autoencoder}. This algorithm maps each point to a hyperrectangle defined by the intersection of all leaves to which the sample is routed. Decoding is then achieved by taking some representative value for each feature in the subregion (e.g., the median). Notably, this approach does \textit{not} include any dimensionality reduction. On the contrary, the embedding space requires minima and maxima for all input variables, resulting in a representation with double the number of features as the inputs. 
Working from the conjecture that optimal prediction is equivalent to optimal compression \citep{rissanen1989, hutter2004universal, grunwald2007}, we aim to represent the information learned by the RF in relatively few dimensions.

\begin{figure*}[t]
  \centering
  \includegraphics[width=0.8\textwidth]{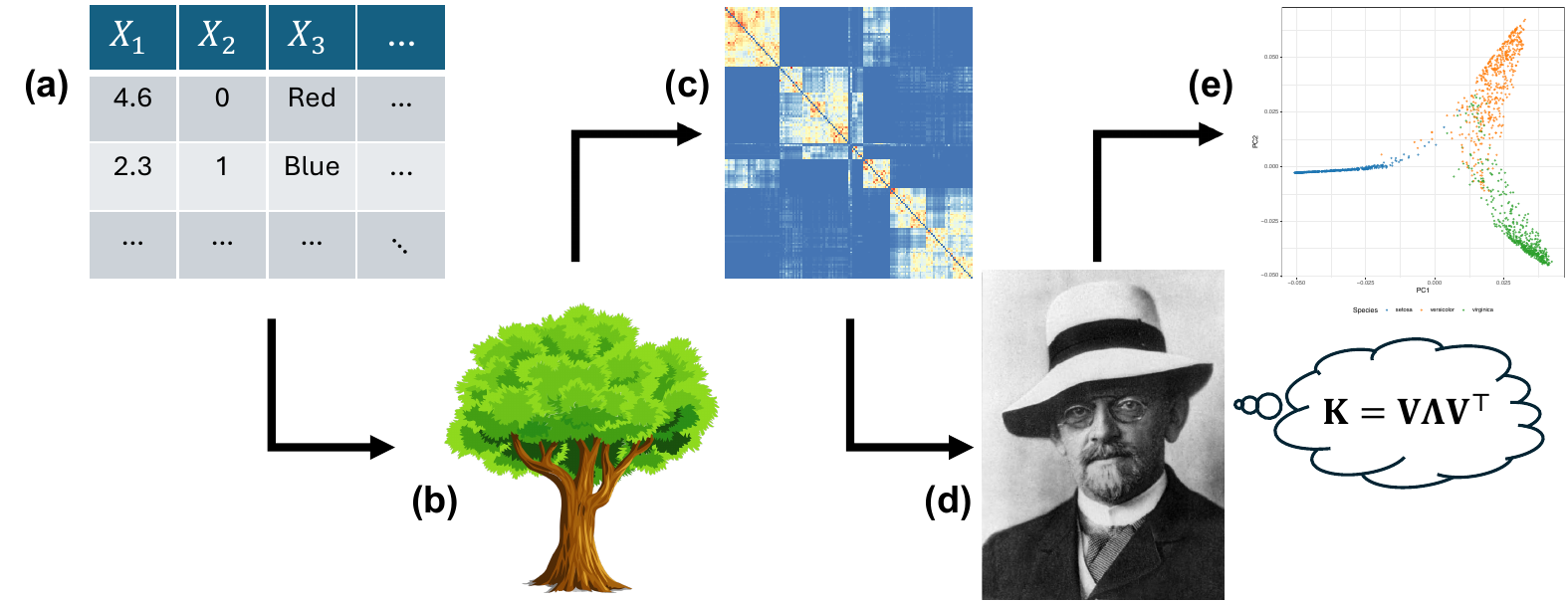}
  \caption{Visual summary of the encoding pipeline. (a) Input data can be a mix of continuous, ordinal, and/or categorical variables. (b) A RF (supervised or unsupervised) is trained on the data. (c) A kernel matrix $\mathbf K \in [0,1]^{n \times n}$ is extracted from the ensemble. (d) $\mathbf K$ is decomposed into its eigenvectors and eigenvalues, as originally proposed by David Hilbert (pictured). (e) Data is projected onto the top $d_{\mathcal{Z}} < n$ principal components of the diffusion map, resulting in a new embedding $\mathbf Z \in \mathbb R^{n \times d_{\mathcal{Z}}}$.}
  \label{fig:pipeline}
\end{figure*}

\section{Encoding}\label{sect:encoding}

As a preliminary motivation, we prove that the RF similarity function is a proper kernel with several notable properties. 
The following definitions are standard in the literature.
Let $\mathcal X$ be a compact metric space, and let $C(\mathcal X)$ be the set of all real-valued continuous functions on $\mathcal X$.
\begin{definition}[Positive semidefinite]
    A symmetric function $k: \mathcal X \times \mathcal X \mapsto \mathbb R$ is \textit{positive semidefinite} (PSD) if, for all $\mathbf x_1, \dots, \mathbf x_n \in \mathcal X$, $n \in \mathbb N$, and $c_i, c_j \in \mathbb R$, we have: $\sum_{i, j}^n c_i ~c_j ~k(\mathbf x_i, \mathbf x_j) \geq 0$.
\end{definition}
The Moore-Aronszajn theorem \citep{aronszajn1950} states that PSD kernels admit a unique reproducing kernel Hilbert space (RKHS) \citep{berlinet_rkhs}, providing a rich mathematical language for analyzing their behavior.
\begin{definition}[Universal]
    We say that the RKHS $\mathcal H$ is \textit{universal} if the associated kernel $k$ is dense in $C(\mathcal X)$ with respect to the uniform norm. That is, for any $f^* \in C(\mathcal X)$ and $\epsilon > 0$, there exists some $f \in \mathcal H$ such that $\lVert f^* - f \rVert_\infty < \epsilon$.
\end{definition}
Several variants of universality exist with slightly different conditions on $\mathcal X$ \citep{sriperumbudur2011}. Examples of universal kernels include the Gaussian and Laplace kernels \citep{steinwart2002}.
\begin{definition}[Characteristic]
    The bounded measurable kernel $k$ is \textit{characteristic} if the function $\mu \mapsto \int_{\mathcal X} k(\cdot, \mathbf x) ~d \mu(\mathbf x)$ is injective, where $\mu$ is a Borel probability measure on $\mathcal X$.
\end{definition}
Characteristic kernels are especially useful for statistical testing, and have inspired flexible nonparametric methods for evaluating marginal and conditional independence \citep{gretton_hsic2, fukumizu_kernel_ci}. 
For instance, \citet{gretton_mmd} show that, when using a characteristic kernel $k$, the maximum mean discrepancy (MMD) between two measures $\mu, \nu$ on $\mathcal X$ is zero iff $\mu = \nu$. The MMD is defined as:
\begin{align*}
    \text{MMD}^2(\mu, \nu; k) := \mathbb E_{\mathbf x, \mathbf x' \sim \mu}[k(\mathbf x, \mathbf x')] - 2 \mathbb E_{\mathbf x \sim \mu, \mathbf y \sim \nu}[k(\mathbf x, \mathbf y)] + \mathbb E_{\mathbf y, \mathbf y' \sim \nu}[k(\mathbf y, \mathbf y')].
\end{align*}
With these definitions in place, we state our first result (all proofs in Appx. \ref{appx:proofs}). 
\begin{theorem}[RF kernel properties]\label{thm:rf_kernel}
    Assume standard RF regularity conditions (see Appx. \ref{appx:proofs}). 
    Then:
    (a) For all $n \in \mathbb N$, the function $k_n^{RF}$ is PSD and the kernel matrix $\mathbf K \in [0,1]^{n \times n}$ is doubly stochastic. 
    (b) Let $\{f_n\}$ be a sequence of RFs. 
    Then the associated RKHS sequence $\{\mathcal H_n\}$ is asymptotically universal.
    That is, for any $f^* \in C(\mathcal X)$ and $\epsilon > 0$, we have:
    \begin{align*}
        \lim_{n \rightarrow \infty} ~P\big( \lVert f^* - f_n \rVert_\infty \geq \epsilon \big) = 0. 
    \end{align*}
    (c) The RKHS sequence $\{\mathcal H_n\}$ is asymptotically characteristic.
    That is, for any $\epsilon > 0$, the Borel measures $\mu, \nu$ are equal if and only if:
    \begin{align*}
        \lim_{n \rightarrow \infty} P\big( \textnormal{MMD}(\mu, \nu; k_n^{RF}) \geq \epsilon \big) = 0.
    \end{align*}
\end{theorem}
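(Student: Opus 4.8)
\emph{Part (a).} I would prove this by a direct computation off the colocation structure of CART. For tree $b$, let $N^{(b)}(\mathbf x):=\sum_{i=1}^n k^{(b)}(\mathbf x,\mathbf x_i)\ge 1$ be the number of training points in the leaf containing $\mathbf x$, and let $\phi^{(b)}(\mathbf x)$ be the one-hot encoding of that leaf. Since $k^{(b)}(\mathbf x,\mathbf x')=1$ exactly when $\mathbf x,\mathbf x'$ share a leaf, we have $N^{(b)}(\mathbf x)=N^{(b)}(\mathbf x')$ on the support of $k^{(b)}$, so the normalized tree kernel factors as $k^{(b)}(\mathbf x,\mathbf x')/N^{(b)}(\mathbf x)=\big\langle\phi^{(b)}(\mathbf x)/\sqrt{N^{(b)}(\mathbf x)},\ \phi^{(b)}(\mathbf x')/\sqrt{N^{(b)}(\mathbf x')}\big\rangle$ and is PSD; being a convex combination of such kernels, $k_n^{RF}$ is PSD. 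For $\mathbf K$: its entries are nonnegative, symmetry on the training sample is the same colocation identity, and each row sums to $B^{-1}\sum_b N^{(b)}(\mathbf x_i)/N^{(b)}(\mathbf x_i)=1$, so $\mathbf K$ is doubly stochastic.

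\emph{Part (b).} The plan is to fix $f^*\in C(\mathcal X)$, treat it as the regression target with noiseless labels $y_i=f^*(\mathbf x_i)$, and use Eq.~\ref{eq:rf_formula} to note that $f_n=\sum_i f^*(\mathbf x_i)\,k_n^{RF}(\cdot,\mathbf x_i)\in\mathcal H_n$; it then suffices to show $\|f^*-f_n\|_\infty\to 0$ in probability. Since by part (a) the weights $k_n^{RF}(\mathbf x,\cdot)$ are nonnegative, sum to one, and are supported on training points that share a leaf with $\mathbf x$ in some tree, I would bound the bias uniformly by $|f_n(\mathbf x)-f^*(\mathbf x)|\le\sum_i k_n^{RF}(\mathbf x,\mathbf x_i)\,|f^*(\mathbf x_i)-f^*(\mathbf x)|\le\omega\big(\sup_{\mathbf x}\Delta_n(\mathbf x)\big)$, where $\omega$ is the modulus of continuity of $f^*$ and $\Delta_n(\mathbf x)$ is the largest of the $B$ leaf diameters at $\mathbf x$. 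The standard regularity conditions make $\sup_{\mathbf x\in\mathcal X}\Delta_n(\mathbf x)\to 0$ in probability, and uniform continuity of $f^*$ on the compact space $\mathcal X$ then finishes the claim; using noiseless labels is what reduces the error to bias alone, so no variance term appears.

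\emph{Part (c).} The ``only if'' direction is immediate, since $\mathrm{MMD}(\mu,\nu;k)=0$ for any kernel whenever $\mu=\nu$. For ``if'', I would follow the classical argument that a universal kernel is characteristic \citep{sriperumbudur2011}, transported to the asymptotic regime: if $\mu\ne\nu$, Riesz representation on the compact space $\mathcal X$ gives $g\in C(\mathcal X)$ with $\int g\,d\mu\ne\int g\,d\nu$; part (b) supplies, with probability tending to one, an $f_n\in\mathcal H_n$ approximating $g$ in sup-norm; and through the dual form $\mathrm{MMD}(\mu,\nu;k_n^{RF})=\sup_{\|h\|_{\mathcal H_n}\le 1}\big|\int h\,d\mu-\int h\,d\nu\big|$ one would like to conclude that $\mathrm{MMD}(\mu,\nu;k_n^{RF})$ does not vanish in probability. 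Making that last step precise is the heart of the matter.

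\emph{The main obstacle.} Parts (a)--(b) are routine. The difficulty is in (c): the RKHS $\mathcal H_n$ is random and changes with $n$, and because the RF kernel concentrates as leaves shrink, essentially no fixed continuous function lies in the unit ball of $\mathcal H_n$ for large $n$ --- indeed double stochasticity forces even the constant function to have squared $\mathcal H_n$-norm $\mathbf 1^\top\mathbf K\mathbf 1=n$, so the approximant from part (b) has norm growing with $n$. Consequently the raw $\mathrm{MMD}$ between any two fixed measures is $O(n^{-1/2})$, and the ``if'' direction has to be run at the correct scale: the plan would be to show, via concentration of the empirical leaf masses around their population values, that $n\,\mathrm{MMD}^2(\mu,\nu;k_n^{RF})$ converges in probability to a weighted $L^2$ distance between the densities of $\mu$ and $\nu$ that is zero iff $\mu=\nu$. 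Establishing this limit and verifying that the regularity conditions on leaf growth keep it non-degenerate is where I expect essentially all the work to be.
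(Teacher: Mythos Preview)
Your treatment of part (a) coincides with the paper's: both exhibit the per-tree feature map $\phi^{(b)}(\mathbf x)=\pi^{(b)}(\mathbf x)/\sqrt{N^{(b)}(\mathbf x)}$ to obtain PSD, and both verify double stochasticity by the same row-sum computation.

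For part (b) you take a genuinely different route. The paper argues via Stone--Weierstrass: it shows that the union $\mathcal A=\bigcup_\ell \mathcal H_\ell$ of tree function classes is a subalgebra of $C(\mathcal X)$ that contains constants and separates points (the latter via Meinshausen's leaf-diameter lemma), hence is uniformly dense. Your argument is instead a direct consistency bound: with noiseless labels $y_i=f^*(\mathbf x_i)$ the RF output $f_n$ lies in $\mathcal H_n$ by Eq.~\ref{eq:rf_formula}, and $\|f_n-f^*\|_\infty$ is controlled by the modulus of continuity of $f^*$ composed with the maximal leaf diameter. Your approach is more elementary and matches the displayed probabilistic statement more literally (which concerns the specific estimator $f_n$, not merely density of the function class); the paper's Stone--Weierstrass route gives a cleaner structural picture of the limiting RKHS but is slightly oblique to the convergence claim as written. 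One technicality worth tightening in your version: you invoke $\sup_{\mathbf x}\Delta_n(\mathbf x)\to 0$ in probability, whereas the cited leaf-diameter results are stated pointwise; since there are only finitely many leaves per tree this supremum is a finite maximum, but you should say so explicitly.

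For part (c), the paper does \emph{not} engage with the scaling issue you raise. Its proof is essentially a two-sentence appeal to the classical ``universal $\Rightarrow$ characteristic'' implication of \citet[Thm.~3]{gretton_2sample}, adapted by ``substituting a sufficiently close approximation in $\mathcal H$, where proximity is defined w.r.t.\ the supremum norm.'' Your observation that the $\mathcal H_n$-norm of any fixed separating witness blows up---e.g.\ $\|1\|_{\mathcal H_n}^2=\mathbf 1^\top\mathbf K\,\mathbf 1=n$ by double stochasticity---so that $\mathrm{MMD}(\mu,\nu;k_n^{RF})\to 0$ for \emph{every} pair of measures, is a genuine obstruction to the ``if'' direction as stated. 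Your proposed remedy (working at the $n\,\mathrm{MMD}^2$ scale and identifying the limit as a weighted $L^2$ distance between densities) already goes beyond what the paper attempts. In short, you have not missed anything here; you have surfaced a gap that the paper's own argument leaves open.
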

The literature on kernel methods is largely focused on fixed kernels such as the radial basis function (RBF). Among adaptive partitioning alternatives, the Scornet kernel been studied in some detail \citep{davies_rf_kernel, scornet_kernel, panda2018}, as has the Mondrian kernel \citep{mondrian_forest, mourtada_minimax_mondrian, cattaneo_mondrian}.
\citet{zhou2022boulevard} show that a non-adaptive variant of $k_n^{RF}$ is PSD and bounded on the unit interval (a strictly weaker property than double stochasticity).
To the best of our knowledge, we are the first to establish items (b) and (c) for the RF kernel. 
Thm. \ref{thm:rf_kernel} confirms that $k_n^{RF}$ is flexible, informative, and generally ``well-behaved'' in ways that will prove helpful for autoencoding.

\paragraph{Spectral Graph Theory}

A key insight from spectral graph theory is that for any PSD kernel $k$, there exists an encoding $g: \mathcal X \mapsto \mathcal Z$ for any embedding dimension $d_{\mathcal{Z}} < n$ that optimally represents the data, in a sense to be made precise below.
This motivates our use of diffusion maps \citep{coifman_diffusion, coifman_pnas}, which are closely related to Laplacian eigenmaps \citep{belkin_eigenmaps, belkin_convergence2}, an essential preprocessing step in popular spectral clustering algorithms \citep{ng2001, vonluxburg2007, john2019}.
These methods are typically used with fixed kernels such as the RBF; by contrast, we use the adaptive RF kernel, which is better suited to mixed tabular data. 

The procedure begins with a dataset of paired feature vectors $\mathbf x \in \mathcal X \subset \mathbb R^{d_\mathcal{X}}$ and outcomes $y \in \mathcal Y \subset \mathbb R$ sampled from the joint distribution $P_{XY}$.\footnote{Even in the unsupervised case, we typically train the ensemble with a regression or classification objective. The trick is to construct some $Y$ that encourages the model to make splits that are informative w.r.t. $P_X$ (e.g., \citep{Shi2006, watson_arf}). For fully random partitions, $Y$ can be any variable that is independent of the features (e.g., \citep{Geurts2006, genuer2012}).}
A RF of $B$ trees $f_n$ is trained on $\{\mathbf x_i, y_i\}_{i=1}^n$. 
Using Eq. \ref{eq:rf_kernel}, we construct the kernel matrix $\mathbf K \in [0,1]^{n \times n}$ with entries $k_{ij} = k^{RF}_n(\mathbf x_i, \mathbf x_j)$. This defines a weighted, undirected graph $\mathcal G_n$ over the training data.
As $\mathbf K$ is doubly stochastic, it can be interpreted as encoding the transitions of a Markov process. 
Spectral analysis produces the decomposition $\mathbf K \mathbf V = \mathbf V \bm \Lambda$, where $\mathbf V \in \mathbb R^{n \times n}$ denotes the eigenvector matrix with corresponding eigenvalues $\bm \lambda \in [0,1]^n$, and $\bm \Lambda = \text{diag}(\bm \lambda)$. 
Indexing from zero, it can be shown that $V_0$ is constant, with $1 = \lambda_0 \geq  \lambda_1  \geq \dots \geq  \lambda_n $.
Following standard convention, we drop this uninformative dimension and take the leading eigenvectors from $V_1$.

The elements of this decomposition have several notable properties.\footnote{Observe that the eigenvectors of $\mathbf K$ are identical to those of the graph Laplacian $\mathbf L = \mathbf I - \mathbf K$, which has $j$th eigenvalue $\gamma_j = 1 - \lambda_j$. For more on the links between diffusion and Laplacian eigenmaps, see \citep{kondor2002, ham2003}.} For instance, the resulting eigenvectors uniquely solve the constrained optimization problem:
\begin{align*}
    \min_{\mathbf V \in \mathbb R^{n \times d_{\mathcal{Z}}}} \sum_{i,j} k_{ij} ~\lVert \mathbf v_i - \mathbf v_j \rVert_2 \quad \text{s.t.}~ \mathbf V ^{\top} \mathbf V = \mathbf I,
\end{align*}
for all $d_{\mathcal{Z}} \in [n]$, thereby minimizing Dirichlet energy and producing the smoothest possible representation of the data that preserves local relationships in the graph. 
These eigenvectors also simplify a number of otherwise intractable graph partition problems, providing smooth approximations that motivate spectral clustering approaches \citep{shi_ncut, vonluxburg2007}. 
If we think of $\mathcal G_n$ as a random sample from a Riemannian manifold $\mathcal M$, then scaling each $V_j$ by $\sqrt n \lambda_j^t$ produces an approximation of the $j$th eigenfunction of the Laplace-Beltrami operator at time $t$, which describes how heat (or other quantities) diffuse across $\mathcal M$ \citep{belkin_eigenmaps, bengio_learning_2004}. Euclidean distance in the resulting space matches diffusion distances across $\mathcal G_n$, providing a probabilistically meaningful embedding geometry \citep{coifman_diffusion}. 

The diffusion map $\mathbf Z = \sqrt n \mathbf V \bm \Lambda^{t}$ represents the long-run connectivity structure of the graph after $t$ time steps of a Markov process.
Test data can be projected into spectral space via the Nystr{\"o}m formula \citep{hallgren_nystrom}, i.e. \mbox{$\mathbf Z_0 = \mathbf K_0 \mathbf Z \bm \Lambda^{-1}$} for some $\mathbf K_0 \in [0,1]^{m \times n}$, where rows index test points and columns index training points. 
For more details on diffusion and spectral graph theory, see \citep{chung1997, nadler2006, vonluxburg2007}.

\section{Decoding}\label{sect:decoding}

Our next task is to solve the inverse problem of decoding vectors from the spectral embedding space back into the original feature space---i.e., learning the function $h: \mathcal Z \mapsto \mathcal X$ such that $\mathbf x \approx h\big( g(\mathbf x) \big)$.
We propose several solutions, including methods based on constrained optimization, split relabeling, and $k$-nearest neighbors. 
To study the properties of these different methods, we introduce the notion of a universally consistent decoder.
\begin{definition}[Universally consistent decoder]
    Let $g^*: \mathcal X \mapsto \mathcal Z$ be a lossless encoder.
    Then we say that the sequence of decoders $\{h_n: \mathcal Z \mapsto \mathcal X\}$ is \textit{universally consistent} if, for all distributions $P_X$, any $\mathbf x \sim P_X$, and all $\epsilon > 0$, we have:
    \begin{align*}
        \lim_{n \rightarrow \infty} ~P\Big( \lVert \mathbf x - h_n\big(g^*(\mathbf x)\big)\rVert_\infty \geq \epsilon \Big) =0.
    \end{align*}
\end{definition}

The first two methods---constrained optimization and split relabeling---are designed to infer likely leaf assignments for latent vectors $\mathbf z$. If these are correctly determined, then the intersection of assigned leaves defines a bounding box that contains the corresponding input vector $\mathbf x$. Our estimate $\hat{\mathbf x}$ is then sampled uniformly from this subspace, which is generally small for sufficiently large and/or deep forests. As a motivation for this approach, we show that a leaf assignment oracle would constitute a universally consistent decoder.

Let $d_{\Phi}^{(b)}$ be the number of leaves in tree $b$, and $d_{\Phi} = \sum_{b=1}^B d_{\Phi}^{(b)}$ the number of leaves in the forest $f$. 
The function $\pi_f: \mathcal X \mapsto \{0,1\}^{d_{\Phi}}$ maps each sample $\mathbf x$ to its corresponding leaves in $f$. It is composed by concatenating the outputs of $B$ unique functions $\pi_f^{(b)}: \mathcal X \mapsto \{0,1\}^{d_{\Phi}^{(b)}}$, each satisfying $\lVert \pi_f^{(b)}(\mathbf x) \rVert_1 = 1$ for all $\mathbf x \in \mathcal X$.
Let $\psi_f: \mathcal Z \mapsto \{0,1\}^{d_\Phi}$ be a similar leaf assignment function, but for latent vectors. 
Then for a fixed forest $f$ and encoder $g$, the \textit{leaf assignment oracle} $\psi^*_{f,g}$ satisfies $\pi_f(\mathbf x) = \psi_{f,g}^* \big(g(\mathbf x)\big)$, for all $\mathbf x \in \mathcal X$.


\begin{theorem}[Oracle consistency]\label{thm:oracle}
    Let $f_n$ be a RF trained on $\{\mathbf x_i, y_i\}_{i=1}^n \overset{\text{i.i.d.}}{\sim} P_{XY}$.
    Let \mbox{$h_n^*: \mathcal Z \mapsto \mathcal X$} be a decoder that (i) maps latent vectors to leaves in $f_n$ via the oracle $\psi^*_{f_n, g}$; then (ii) reconstructs data by sampling uniformly from the intersection of assigned leaves for each sample. 
    Then, under the assumptions of Thm. \ref{thm:rf_kernel}, the sequence $\{h^*_n\}$ is universally consistent.
\end{theorem}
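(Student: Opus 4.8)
The plan is to reduce the $\ell_\infty$ reconstruction error to the diameter of a random-forest cell and then invoke the cell-shrinkage behavior that is built into the standard RF regularity conditions of Thm.~\ref{thm:rf_kernel}. First I would unpack the oracle decoder. Because $g^*$ is lossless, leaf membership in $f_n$ is a deterministic function of the code, so the oracle $\psi^*_{f_n,g^*}$ is well-defined and, by its defining property, $\psi^*_{f_n,g^*}\big(g^*(\mathbf x)\big) = \pi_{f_n}(\mathbf x)$ for every $\mathbf x \in \mathcal X$. Writing $L_n^{(b)}(\mathbf x) \subseteq \mathcal X$ for the leaf of tree $b$ containing $\mathbf x$ --- a bounded hyperrectangle intersected with $\mathcal X$, using the subsampling convention so that each $\pi_{f_n}^{(b)}$ is a clean one-hot indicator --- the first step of $h_n^*$ recovers exactly the cell $C_n(\mathbf x) = \bigcap_{b=1}^B L_n^{(b)}(\mathbf x)$, and the second step returns $\hat{\mathbf x}$ drawn uniformly from $C_n(\mathbf x)$. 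Since both $\mathbf x$ and $\hat{\mathbf x}$ lie in $C_n(\mathbf x) \subseteq L_n^{(1)}(\mathbf x)$, almost surely over the uniform draw
\[
  \big\lVert \mathbf x - h_n^*\big(g^*(\mathbf x)\big) \big\rVert_\infty \;\le\; \mathrm{diam}_\infty\big(C_n(\mathbf x)\big) \;\le\; \mathrm{diam}_\infty\big(L_n^{(1)}(\mathbf x)\big).
\]

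Second, I would argue that the right-hand side vanishes in probability. This is exactly the ``shrinking cells'' guarantee that is part of the standard RF regularity conditions underlying Thm.~\ref{thm:rf_kernel}: random feature selection gives every coordinate a positive probability of being split at each internal node, and the $\alpha$-regular / sufficient-splitting and depth conditions rule out degenerate split sequences, so the $\ell_\infty$-diameter of the cell containing a test point tends to $0$ in probability (see the cell-diameter lemmas from the RF consistency literature quoted in Appx.~\ref{appx:proofs}). Hence for every $\epsilon>0$ we get $P\big(\mathrm{diam}_\infty(L_n^{(1)}(\mathbf x)) \ge \epsilon\big) \to 0$, and the display above yields
\[
  P\Big( \big\lVert \mathbf x - h_n^*\big(g^*(\mathbf x)\big) \big\rVert_\infty \ge \epsilon \Big) \;\le\; P\big(\mathrm{diam}_\infty(L_n^{(1)}(\mathbf x)) \ge \epsilon\big) \;\longrightarrow\; 0.
\]
As these regularity conditions hold for every Borel measure $P_X$ on the compact space $\mathcal X$, this establishes universal consistency of $\{h_n^*\}$.

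I expect essentially all the difficulty to sit in the second step. Unlike Mondrian-type forests, CART splits are data-adaptive and can concentrate near cell boundaries, so cell-diameter decay cannot be obtained by a purely geometric argument and must be imported from the regularity conditions; the appendix proof will need to state these precisely (feature-selection probabilities bounded away from $0$, a sufficient-splitting/depth condition, and subsampling rather than bootstrap). A secondary point worth flagging is that for categorical and ordinal coordinates the diameter must be read with respect to the metric on $\mathcal X$, and a coordinate's contribution becomes constant once its level set is pinned down; this is harmless for the bound but should be noted when the conditions are formalized. The uniform-sampling step poses no difficulty, since the displayed bound holds pathwise in $\hat{\mathbf x}$.
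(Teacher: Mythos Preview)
Your proposal is correct and follows essentially the same approach as the paper: both arguments reduce the oracle's reconstruction error to the shrinking-leaf phenomenon under the regularity conditions (A1)--(A5), invoking the Meinshausen cell-diameter lemma that the appendix already cites. Your version is in fact more explicit than the paper's, which states the result somewhat informally in terms of leaves ``converging on individual points''; your pathwise bound $\lVert \mathbf x - \hat{\mathbf x}\rVert_\infty \le \mathrm{diam}_\infty(L_n^{(1)}(\mathbf x))$ makes the same mechanism precise.
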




\subsection{Constrained Optimization}
Our basic strategy for this family of decoders is to estimate a kernel matrix from a set of embeddings, then use this matrix to infer leaf assignments. 
Let $\mathbf s \in \{1/[n-1]\}^{d_\Phi}$ be a vector of inverse leaf sample sizes, composed of tree-wise vectors $\mathbf s^{(b)}$ with entries $s_i^{(b)} = 1/\sum_{j=1}^n \pi_i^{(b)}(\mathbf x_j)$.
Then the canonical feature map for the RF kernel can be written $\phi(\mathbf x) = \big[ \phi^{(1)}(\mathbf x), \dots, \phi^{(B)}(\mathbf x) \big]$, with tree-wise feature maps $\phi^{(b)}(\mathbf x) = \pi^{(b)}(\mathbf x) \odot \sqrt{\mathbf s^{(b)}}$, where $\odot$ denotes the Hadamard (element-wise) product.
Now RF kernel evaluations can be calculated via the scaled inner product $k^{RF}_n(\mathbf x, \mathbf x') = B^{-1} \langle \phi(\mathbf x), \phi(\mathbf x') \rangle$, which is equivalent to Eq. \ref{eq:rf_kernel}. 

Say we have $n$ training samples used to fit the forest $f_n$, and $m$ latent vectors to decode from an embedding space of dimension $d_{\mathcal{Z}} < n$. We will refer to these samples as a test set, since they may not correspond to any training samples.
We are provided a matrix of embeddings $\mathbf Z_0 \in \mathbb R^{m \times d_{\mathcal{Z}}}$, from which we estimate the corresponding kernel matrix $\hat{\mathbf K}_0 = \mathbf Z_0 \bm \Lambda \mathbf Z^{\dagger}$, where $\mathbf Z^{\dagger}$ denotes the Moore-Penrose pseudo-inverse of $\mathbf Z$. 

Now we must identify the most likely leaf assignments for each of our $m$ (unseen) test samples $\mathbf X_0 \in \mathbb R^{m \times d_\mathcal{X}}$, given their latent representation $\mathbf Z_0$. 
Call this target matrix $\bm \Psi \in \{0,1\}^{m \times d_{\Phi}}$.
To estimate it, we start with the binary matrix of leaf assignments for training samples $\bm \Pi \in \{0,1\}^{n \times d_{\Phi}}$. 
Exploiting the inner product definition of a kernel, observe that our original (training) adjacency matrix $\mathbf K$ satisfies $B \mathbf K = \bm \Phi \bm \Phi^{\top} = \bm \Pi \mathbf S \bm \Pi^{\top}$, where $\bm \Phi \in [0,1]^{n \times d_\Phi}$ is the RKHS representation of $\mathbf X$, and $\mathbf S = \text{diag}(\mathbf s)$. 
We partition $[d_{\Phi}]$ into $B$ subsets $L^{(b)}$ that index the leaves belonging to tree $b$. 
Recall that each tree $b$ partitions the feature space $\mathcal X$ into $L^{(b)}$ hyperrectangular subregions $\mathcal X_\ell^{(b)} \subset \mathcal X$, one for each leaf $\ell \in [L^{(b)}]$. Let $R_i^{(b)} \in \{\mathcal X_1^{(b)}, \dots, \mathcal X_{L^{(b)}}^{(b)}\}$ denote the region to which sample $i$ is routed in tree $b$.
Then leaf assignments for test samples can be calculated by solving the following integer linear program (ILP):
\begin{align}\label{eq:decode}
    \min_{\bm \Psi \in \{0,1\}^{m \times d_{\Phi}}} \lVert B\hat{\mathbf K}_0^{\top} - \bm \Pi \mathbf S \bm \Psi^{\top} \rVert_1 \quad
    \text{s.t.} ~\forall i,b : \sum_{\ell \in L^{(b)}} \psi_{i \ell} = 1,
    \quad \forall i: ~\bigcap_{b \in [B]} R_i^{(b)} \neq \emptyset.
\end{align}
The objective is an entry-wise $L_1$ norm that effectively treats the resulting matrix as a stacked vector. 
The first constraint guarantees that test samples are one-hot encoded on a per-tree basis. 
The second constraint states that the intersection of all assigned hyperrectangles is nonempty. We call these the \textit{one-hot} and \textit{overlap} constraints, respectively. Together they ensure consistent leaf assignments both within and between trees.
The ILP approach comes with the following guarantee.
\begin{theorem}[Uniqueness]\label{thm:unique}
    Assume we have a lossless encoder $g^*: \mathcal X \mapsto \mathcal Z$ such that the estimated $\hat{\mathbf K}_0$ coincides with the ground truth $\mathbf K_0^*$. 
    Then, under the assumptions of Thm. \ref{thm:rf_kernel}, as $n \rightarrow \infty$, with high probability, the ILP of Eq. \ref{eq:decode} is uniquely solved by the true leaf assignments $\bm \Psi^*$. 
\end{theorem}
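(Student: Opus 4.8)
The plan is to show two things: (i) the true leaf-assignment matrix $\bm\Psi^*$ is feasible for the ILP and achieves objective value zero, and (ii) with high probability as $n\to\infty$, no other feasible matrix achieves objective value zero. Part (i) is essentially by construction. Since the encoder $g^*$ is lossless and $\hat{\mathbf K}_0 = \mathbf K_0^*$, the ground-truth kernel block satisfies $B\mathbf K_0^* = \bm\Psi^*\mathbf S\bm\Pi^\top$ by the inner-product identity $k^{RF}_n(\mathbf x,\mathbf x') = B^{-1}\langle\phi(\mathbf x),\phi(\mathbf x')\rangle$ established just above (the same identity that gives $B\mathbf K = \bm\Pi\mathbf S\bm\Pi^\top$ on the training block). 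Transposing gives $B(\hat{\mathbf K}_0)^\top = \bm\Pi\mathbf S(\bm\Psi^*)^\top$, so the $L_1$ objective vanishes at $\bm\Psi^*$. Feasibility of the one-hot constraint is immediate since each test point is routed to exactly one leaf per tree; feasibility of the overlap constraint holds because the true point $\mathbf x$ lies in $\bigcap_b R^{(b)}$, which is therefore nonempty. Hence $\bm\Psi^*$ is an optimal feasible solution.

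For part (ii), suppose some feasible $\bm\Psi \neq \bm\Psi^*$ also attains objective zero, i.e. $\bm\Pi\mathbf S\bm\Psi^\top = \bm\Pi\mathbf S(\bm\Psi^*)^\top$, equivalently $\bm\Pi\mathbf S(\bm\Psi - \bm\Psi^*)^\top = \mathbf 0$. Writing $\bm\Delta = \bm\Psi - \bm\Psi^*$, this says every column of $\bm\Delta^\top$ lies in the null space of $\bm\Pi\mathbf S$; since $\mathbf S$ is a positive diagonal matrix, this is the null space of $\bm\Pi$ up to a diagonal reweighting. Now fix a test point $i$ and a tree $b$: the one-hot constraint forces the restriction of row $i$ of both $\bm\Psi$ and $\bm\Psi^*$ to $L^{(b)}$ to be a standard basis vector, so the restriction of $\bm\Delta_i$ to $L^{(b)}$ is either zero or of the form $\mathbf e_\ell - \mathbf e_{\ell'}$ for two distinct leaves $\ell,\ell'$ of tree $b$. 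The condition $\bm\Pi\mathbf S\bm\Delta_i^\top = \mathbf 0$ then reads $\sum_b s_\ell^{(b)}\bm\pi^{(b)}_{\cdot,\ell} = \sum_b s_{\ell'}^{(b)}\bm\pi^{(b)}_{\cdot,\ell'}$ where $\bm\pi^{(b)}_{\cdot,\ell}$ is the indicator (over training points) of membership in leaf $\ell$ of tree $b$. The key structural fact is that distinct leaves of the \emph{same} tree partition the training sample, so their membership indicators have disjoint support; combining this with the overlap constraint (which rules out assignments whose hyperrectangles fail to intersect and hence cannot correspond to any consistent point), one argues that such a cancellation across trees is impossible once the training sample is rich enough to separate the leaves geometrically. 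Concretely: under the RF regularity conditions of Thm. \ref{thm:rf_kernel}, leaf diameters shrink and each leaf of $f_n$ contains training points with high probability, so the collection of leaf-membership indicator vectors $\{\mathbf S^{1/2}\bm\pi^{(b)}_{\cdot,\ell}\}$ across all trees and leaves that are mutually compatible (nonempty intersection) is linearly independent with probability tending to one. This forces $\bm\Delta = \mathbf 0$, contradicting $\bm\Psi \neq \bm\Psi^*$.

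The main obstacle is making the linear-independence / no-cancellation argument in part (ii) rigorous. Within a single tree it is trivial because leaves have disjoint support, but $\bm\Pi$ stacks the blocks of all $B$ trees horizontally, and two different leaves from different trees have overlapping supports, so one must rule out an alternative assignment that ``rearranges'' leaves across trees while preserving all pairwise kernel values to the training set. This is where the overlap constraint does essential work: any $\bm\Psi$ violating it is infeasible, and any $\bm\Psi$ satisfying it but differing from $\bm\Psi^*$ would have to reproduce, for every training point $\mathbf x_j$, the exact multiset of normalized co-occurrence weights $k^{RF}_n(\mathbf x_i,\mathbf x_j)$ — which, as the forest grows and leaves localize, pins down the leaf of $\mathbf x_i$ in every tree uniquely with high probability. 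I would formalize this by showing that the map from a feasible leaf-assignment vector to its row of $B\hat{\mathbf K}_0^\top$ becomes injective on the feasible set asymptotically, leveraging the shrinking-leaf and populated-leaf guarantees that already underpin Thm. \ref{thm:rf_kernel} and Thm. \ref{thm:oracle}. A secondary point to handle carefully is the ``with high probability'' quantifier: the statement is over the randomness of the training draw and the forest's internal randomization, and one needs the bad event (some pair of compatible leaves indistinguishable from the training sample) to have probability $o(1)$, which follows from the same consistency machinery but should be stated explicitly.
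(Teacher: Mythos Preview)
Your Part (i) is correct and matches the paper's argument exactly: $\bm\Psi^*$ is feasible and drives the objective to zero by the inner-product identity for $k_n^{RF}$.

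For Part (ii), you correctly isolate the crux---injectivity of $\bm\psi \mapsto \bm\Pi\mathbf S\bm\psi^\top$ over the feasible set---but your attempt to prove it via linear independence of the weighted indicator columns $\{\mathbf S^{1/2}\bm\pi^{(b)}_{\cdot,\ell}\}$ is not the route the paper takes, and you yourself flag it as the unresolved obstacle. The paper sidesteps the combinatorics entirely by pushing the problem back into $\mathcal X$: the overlap constraint guarantees that any feasible $\bm\psi$ is realized by some point $\mathbf x \in \mathcal X$ (pick anything in $\bigcap_b R^{(b)}$), so two feasible $\bm\psi \neq \bm\psi'$ with identical kernel rows correspond to distinct points $\mathbf x \neq \mathbf x'$ with $k_n^{RF}(\mathbf x,\mathbf x_j) = k_n^{RF}(\mathbf x',\mathbf x_j)$ for all $j$, hence $f_n(\mathbf x) = f_n(\mathbf x')$ by Eq.~\ref{eq:rf_formula}. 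The paper then invokes Meinshausen's shrinking-diameter lemma (already used in the proof of Thm.~\ref{thm:rf_kernel}(b)) to conclude that $P(f_n(\mathbf x) = f_n(\mathbf x')) \to 0$ for any fixed $\mathbf x \neq \mathbf x'$, which delivers the asymptotic injectivity directly.

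The difference is that your argument stays in the $d_\Phi$-dimensional leaf-assignment space and tries to rule out cancellations among indicator vectors across trees, which is awkward because leaves from different trees genuinely overlap in support. The paper's move---use feasibility to get back to points in $\mathcal X$, then recycle the forest's point-separation property---is shorter and reuses existing machinery rather than proving a new linear-algebraic fact. Your closing paragraph essentially gestures at this (``leveraging the shrinking-leaf \ldots guarantees''), but the missing concrete step is the identification of feasible leaf assignments with points and the reduction to $f_n(\mathbf x) = f_n(\mathbf x')$.
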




Together with Thm. \ref{thm:oracle},  Thm. \ref{thm:unique} implies that the ILP approach will converge on an exact reconstruction of the data under ideal conditions. 
While this may be encouraging, there are two major obstacles in practice.
First, this solution scales poorly with $m$ and $d_{\Phi}$. Despite the prevalence of highly optimized solvers, ILPs are NP-complete and therefore infeasible for large problems.
Second, even with an oracle for solving this program, results may be misleading when using noisy estimates of $\hat{\mathbf K}_0$. 
Since this will almost always be the case for $d_{\mathcal{Z}} \ll n$---an inequality that is almost certain to hold in most real-world settings---the guarantees of Thm. \ref{thm:unique} will rarely apply. 
We describe a convex relaxation in Appx. \ref{appx:greedy} via the exclusive lasso \citep{zhou_exclusive, campbell_xlasso}, an approach that is more tractable than the ILP but still turns each decoding instance into a nontrivial optimization problem.


\subsection{Split Relabeling}
Another strategy for computing leaf assignments front-loads the computational burden so that downstream execution requires just a single pass through a pretrained model, as with neural autoencoders. 
We do this by exploiting the tree structure itself, relabeling the splits in the forest so that they apply directly in embedding space. 

The procedure works as follows. 
Recall that each split is a literal of the form $X_j \bowtie x$ for some $j \in [d_\mathcal{X}], x \in \mathcal X_j$, where $\bowtie ~\in \{<, =\}$ (the former for continuous, the latter for categorical data). 
At each node, we create a synthetic dataset $\tilde{\mathbf X}$ by drawing uniformly from the corresponding region. 
We embed these points with a diffusion map to create the corresponding matrix $\tilde{\mathbf Z}$. Samples are labeled with a binary outcome variable $Y$ indicating whether they are sent to left or right child nodes. 
Next, we search for the axis-aligned split in the embedding space that best predicts $Y$. 
The ideal solution satisfies $Z_k < z \Leftrightarrow X_j \bowtie x$, for some $k \in [d_{\mathcal{Z}}], z \in \mathcal Z_k$. 
Perfect splits may not be possible, but optimal solutions can be estimated with CART. 
Once all splits have been relabeled, the result is a tree with the exact same structure as the original but a new semantics, effectively mapping a recursive partition of the input space onto a recursive partition of the embedding space. 

This strategy may fare poorly if no axis-aligned split in $\mathcal Z$ approximates the target split in $\mathcal X$. 
In principle, we could use any binary decision procedure to route samples at internal nodes. 
For example, using logistic regression, we could create a more complex partition of $\mathcal Z$ into convex polytopes.
Of course, this increase in expressive flexibility comes at a cost in space and time complexity.
The use of synthetic data allows us to choose the effective sample size for learning splits, which is especially advantageous in deep trees, where few training points are available as depth increases.

\subsection{Nearest Neighbors}
Our final decoding strategy elides the leaf assignment step altogether in favor of directly estimating feature values via $k$-nearest neighbors ($k$-NN). 
First, we find the most proximal points in the latent space.
Once nearest neighbors have been identified, we reconstruct their associated inputs using the leaf assignment matrix $\bm \Pi$ and the splits stored in our forest $f_n$. From these ingredients, we infer the intersection of all leaf regions for each training sample---what \citet{feng_autoencoder} call the ``maximum compatible rule''---and generate a synthetic training set $\tilde{\mathbf X}$ by sampling uniformly in these subregions. Observe that this procedure guarantees $g(\tilde{\mathbf X}) = \mathbf Z$ by construction.

Neighbors are weighted in inverse proportion to their diffusion distance from the target $\mathbf z_0$, producing the weight function $w: \mathcal Z \mapsto \Delta^{k-1}$. 
Let $K \subset [n]$ denote the indices of the selected points and $\{ \tilde{\mathbf x}_i \}_{i \in K}$ the corresponding synthetic inputs. 
Then the $j$th entry of the decoded vector $\hat{\mathbf x}_0$ is given by $\hat x_{0j} = \sum_{i \in K} ~w_i(\mathbf z_0)~\tilde{x}_{ij}$.
For categorical features, we take the most likely label across all neighbors, weighted by $w$, with ties broken randomly. The $k$-NN decoder comes with similar asymptotic guarantees to the previous methods, without assuming access to a leaf assignment oracle. 
\begin{theorem}[$k$-NN consistency]\label{thm:knn}
    Let $k \rightarrow \infty$ and $k/n \rightarrow 0$. Then under the assumptions of Thm. \ref{thm:rf_kernel}, the $k$-NN decoder is universally consistent.
\end{theorem}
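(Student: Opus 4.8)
The plan is to decompose the reconstruction error coordinate by coordinate and match each piece either to random-forest leaf-diameter shrinkage or to the classical consistency of weighted $k$-nearest-neighbor regression. Since $\mathcal X \subset \mathbb R^{d_\mathcal X}$ is compact with $d_\mathcal X$ fixed, a union bound reduces $\{\lVert \mathbf x - h_n(g^*(\mathbf x)) \rVert_\infty \ge \epsilon\}$ to the $d_\mathcal X$ events $\{|x_j - \hat x_{0j}| \ge \epsilon\}$, so it suffices to treat one coordinate at a time. Write $\mathbf z_0 = g^*(\mathbf x)$, let $\mu_Z$ be the law of $Z = g^*(X)$, let $K \subset [n]$ index the $k$ nearest neighbors of $\mathbf z_0$ among the embeddings $\mathbf z_i = g^*(\mathbf x_i)$, and let $w_i = w_i(\mathbf z_0)$ denote the (simplex-valued, inverse-diffusion-distance) weights. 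For a continuous coordinate $j$,
\begin{align*}
\hat x_{0j} - x_j = \underbrace{\sum_{i \in K} w_i\,(\tilde x_{ij} - x_{ij})}_{\text{(I)}} \;+\; \underbrace{\Big(\sum_{i \in K} w_i\, x_{ij} - \mathbb E[X_j \mid Z = \mathbf z_0]\Big)}_{\text{(II)}} \;+\; \underbrace{\Big(\mathbb E[X_j \mid Z = \mathbf z_0] - x_j\Big)}_{\text{(III)}},
\end{align*}
and I argue each summand vanishes in probability.

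\emph{Terms (III) and (I).} Since $g^*$ is lossless it is injective; as an injective Borel map between standard Borel spaces it admits a Borel left inverse (Lusin--Souslin), so $\sigma(Z) = \sigma(X)$ and hence $\mathbb E[X_j \mid Z] = X_j$ a.s.\ with $\operatorname{Var}(X_j \mid Z) = 0$. In particular term (III) is identically $0$, and $x_{ij} = m_j(\mathbf z_i)$ for the regression function $m_j(\mathbf z) := \mathbb E[X_j \mid Z = \mathbf z]$ (which is bounded, $\mathcal X$ being compact). For term (I), $\tilde{\mathbf x}_i$ is sampled uniformly from $\bigcap_b R_i^{(b)}$, a region that also contains $\mathbf x_i$; thus $|\tilde x_{ij} - x_{ij}|$ is at most the $\ell_\infty$-diameter of $\bigcap_b R_i^{(b)}$ and $|\text{(I)}| \le \max_{i \in K}\operatorname{diam}_\infty\!\big(\bigcap_b R_i^{(b)}\big)$. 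The RF regularity conditions invoked in Thm.~\ref{thm:rf_kernel} force leaf diameters to $0$; upgrading this to a bound that holds simultaneously over all $n$ training leaf-intersections with high probability (a union bound, using $k \le n$) gives $|\text{(I)}| \xrightarrow{P} 0$, uniformly over the data-dependent set $K$. For a discrete coordinate the same bound forces $\tilde x_{ij} = x_{ij}$ for all $i$ once $n$ is large.

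\emph{Term (II) and assembly.} Term (II) is the error of a weighted $k$-NN regression estimate of $m_j$ at $\mathbf z_0$. Because the residual vanishes, $x_{ij} = m_j(\mathbf z_i)$ and so $\text{(II)} = \sum_{i \in K} w_i\big(m_j(\mathbf z_i) - m_j(\mathbf z_0)\big)$, whose magnitude is at most the oscillation of $m_j$ over the ball of radius $D_k := \max_{i \in K}\lVert \mathbf z_i - \mathbf z_0 \rVert$. Since $k/n \to 0$ and $\mathbf z_0 \in \operatorname{supp}\mu_Z$ almost surely, $D_k \to 0$; hence $\text{(II)} \to 0$ at $\mu_Z$-almost every $\mathbf z_0$, using uniform continuity of the inverse encoder if $g^*$ is continuous, or the Lebesgue-point property of $m_j \in L^1(\mu_Z)$ in general. (When the embedding is only approximately injective---e.g.\ because the forest is finite---the residual is small but nonzero, and it is averaged away by Stone's theorem for weighted $k$-NN regression; this is where the hypothesis $k \to \infty$ is used.) Combining, $P(|\hat x_{0j} - x_j| \ge \epsilon) \le P(|\text{(I)}| \ge \epsilon/2) + P(|\text{(II)}| \ge \epsilon/2) \to 0$ for continuous $j$. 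For a categorical $j$, the $w$-weighted plurality is taken over labels $\tilde x_{ij} = x_{ij}$ (eventually), which estimate the degenerate posterior $c \mapsto \mathbb{1}[c = x_j]$ (from $\operatorname{Var}(X_j \mid Z) = 0$), so the arg-max returns $x_j$ with probability tending to one, with no asymptotic ties. A union bound over the $d_\mathcal X$ coordinates finishes the proof.

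\emph{Main obstacle.} The routine parts are the coordinate reduction, the identification argument for (III), and the appeal to Stone's theorem. The delicate points are: (a) the training embeddings $\mathbf z_i$ come from the \emph{data-adaptive} RF encoder, so they are neither independent nor drawn from a fixed law, and if one wants $g^*$ to be a generic fixed lossless encoder rather than the RF encoder itself, one must also control $\lVert g_n(\mathbf x_i) - g^*(\mathbf x_i)\rVert$ relative to the $k$-NN bandwidth $D_k$---here the cited convergence of diffusion maps, or simply taking $g^*$ to be the (asymptotically lossless, by the characteristic property of Thm.~\ref{thm:rf_kernel}) RF encoder, closes the gap; and (b) the leaf-diameter control in term (I) must be made uniform over the data-dependent neighbor set $K$. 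Both amount to upgrading in-probability statements to high-probability uniform ones, which is where the argument requires the most care.
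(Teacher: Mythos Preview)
Your proof is correct and follows the same two-ingredient recipe as the paper: leaf-diameter shrinkage under the RF regularity conditions handles the gap between synthetic and true training inputs (your term (I), the paper's step (a), both ultimately resting on Meinshausen's lemma via Thm.~\ref{thm:oracle}), and Stone's universal consistency of weighted $k$-NN regression handles the averaging step (your term (II), the paper's step (b)). Your three-term decomposition is slightly finer---you isolate the losslessness of $g^*$ as a separate term (III) rather than folding it into the $k$-NN step---and you explicitly flag the uniformity issues (data-adaptive embeddings, uniform leaf-diameter control over the random set $K$) that the paper's much terser argument leaves implicit, but the core strategy is identical.
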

This approach is arguably truer to the spirit of RFs, using local averaging to decode inputs instead of just predict outputs. 
Like the split relabeling approach, this is a modular decoding solution that does not rely on any particular encoding scheme. However, it is uniquely well suited to spectral embedding techniques, which optimally preserve kernel structure with respect to $L_2$ norms in the latent space. 

\section{Experiments}\label{sect:exp}

In this section, we present a range of experimental results on visualization, reconstruction and denoising.
Further details on all experiments can be found in Appx. \ref{appx:exp}, along with additional results. 
Code for reproducing all results is available online.\footnote{\url{https://github.com/bips-hb/RFAE}.} The method is implemented as a package in R (see previous footnote), as well as in Python\footnote{\url{https://github.com/binhducvu/RFAE_py}.}.

\begin{figure*}[t]
    \centering
    \includegraphics[width=0.99\linewidth]{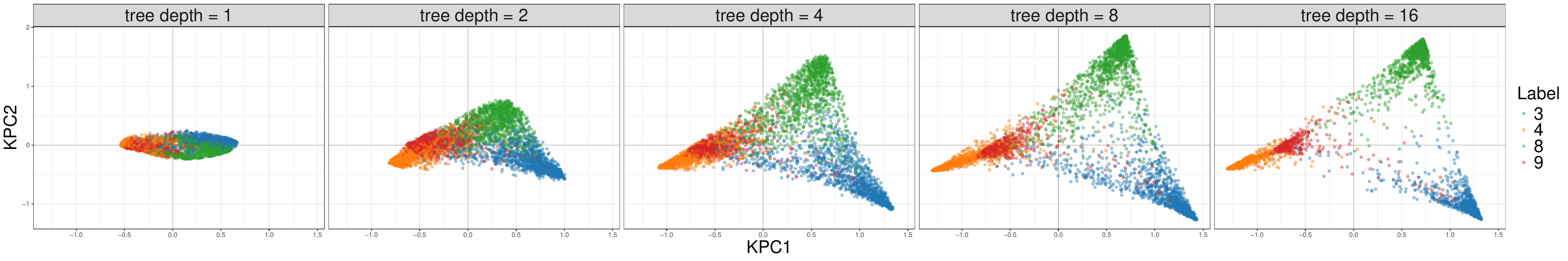}
    \caption{Diffusion maps visualize RF training. Using a subsample of the MNIST dataset, we find that digits become more distinct in the embedding space as tree depth increases.}
    \label{fig:depth}
\end{figure*}
\vspace{-1em}

\paragraph{Visualization}
As a preliminary proof of concept, we visualize the embeddings of a RF classifier with 200 trees as it trains on a subset of the MNIST dataset \citep{deng2012_MNIST} including samples with labels 3, 4, 8, and 9 (see Fig. \ref{fig:depth}). 
Plotting a sequence of diffusion maps with $d_{\mathcal Z}=2$ at increasing tree depth, we find the model learning to distinguish between the four digits, which gradually drift into their own regions of the latent space.
Early in training, the data are clumped around the origin.
As depth increases, the manifold blooms and samples concentrate by class label.
KPC1 appears to separate 3 and 8 from 4 and 9, which makes sense given that these respective pairs are often hard to distinguish in some handwritten examples. Meanwhile, KPC2 further subdivides 3 from 8. The relative proximity of 4's and 9's demonstrates that the RF is somewhat uncertain about these samples, although with extra dimensions we find clearer separation (not shown). 
In other words, the embeddings suggest a highly interpretable recursive partition, as we might expect from a single decision tree. 

\begin{table}[t]
\centering \footnotesize
\caption{Mean distortion (with standard errors) for each method and dataset, across all $d_{\mathcal{Z}}$ values used and all runs. Best result per row is bolded.}
\vspace{1em}
\begin{tabular}{lccccc}
\toprule
        \textbf{Dataset} & \textbf{RFAE} & \textbf{TVAE} & \textbf{TTVAE} & \textbf{AE} & \textbf{VAE} \\
        \midrule
        \texttt{abalone} & \textbf{0.167 (0.002)} & 0.309 (0.005) & 0.260 (0.003) & 0.230 (0.025) & 0.211 (0.006) \\
        \texttt{adult} & 0.326 (0.007) & \textbf{0.158 (0.005)} & 0.195 (0.007) & 0.401 (0.003) & 0.391 (0.004) \\
        \texttt{banknote} & \textbf{0.100 (0.012)} & 0.312 (0.013) & 0.276 (0.023) & 0.724 (0.023) & 0.771 (0.013) \\
        \texttt{bc} & 0.333 (0.003) & 0.564 (0.003) & 0.359 (0.005) & \textbf{0.287 (0.008)} & 0.578 (0.003) \\
        \texttt{car} & 0.320 (0.011) & 0.195 (0.014) & \textbf{0.107 (0.015)} & 0.349 (0.012) & 0.313 (0.011) \\
        \texttt{churn} & \textbf{0.352 (0.012)} & 0.603 (0.011) & 0.422 (0.014) & 0.861 (0.005) & 0.731 (0.006) \\
        \texttt{credit} & \textbf{0.315 (0.004)} & 0.450 (0.005) & 0.375 (0.011) & 0.450 (0.005) & 0.456 (0.004) \\
        \texttt{diabetes} & \textbf{0.479 (0.016)} & 0.726 (0.007) & 0.643 (0.014) & 0.799 (0.011) & 0.895 (0.004) \\
        \texttt{dry\_bean} & 0.137 (0.002) & 0.273 (0.002) & 0.303 (0.008) & \textbf{0.083 (0.014)} & 0.206 (0.001) \\
        \texttt{forestfires} & \textbf{0.575 (0.008)} & 0.804 (0.003) & 0.705 (0.008) & 0.782 (0.007) & 0.790 (0.003) \\
        \texttt{hd} & \textbf{0.432 (0.008)} & 0.582 (0.003) & 0.605 (0.006) & 0.892 (0.003) & 0.916 (0.002) \\
        \texttt{king} & \textbf{0.308 (0.008)} & 0.352 (0.006) & 0.348 (0.008) & 0.377 (0.011) & 0.518 (0.004) \\
        \texttt{marketing} & 0.292 (0.009) & 0.304 (0.005) & \textbf{0.259 (0.011)} & 0.357 (0.007) & 0.372 (0.004) \\
        \texttt{mushroom} & 0.083 (0.001) & 0.093 (0.003) & \textbf{0.011 (0.003)} & 0.055 (0.004) & 0.035 (0.004) \\
        \texttt{obesity} & \textbf{0.227 (0.008)} & 0.354 (0.004) & 0.299 (0.008) & 0.306 (0.009) & 0.358 (0.003) \\
        \texttt{plpn} & \textbf{0.176 (0.006)} & 0.282 (0.006) & 0.224 (0.011) & 0.384 (0.013) & 0.410 (0.009) \\
        \texttt{spambase} & 0.558 (0.005) & 0.825 (0.002) & 0.807 (0.003) & \textbf{0.446 (0.010)} & 0.784 (0.001) \\
        \texttt{student} & \textbf{0.371 (0.002)} & 0.424 (0.001) & 0.426 (0.004) & 0.536 (0.003) & 0.551 (0.002) \\
        \texttt{telco} & 0.177 (0.003) & 0.155 (0.003) & \textbf{0.091 (0.007)} & 0.128 (0.005) & 0.130 (0.005) \\
        \texttt{wq} & \textbf{0.240 (0.005)} & 0.691 (0.008) & 0.759 (0.006) & 0.467 (0.019) & 0.708 (0.004) \\
        \midrule
        Average Rank & \textbf{1.80} & 3.38 & 2.45 & 3.27 & 4.10 \\
\bottomrule
\end{tabular}
\label{tab:mean_distortion}
\end{table}

\paragraph{Reconstruction}

We limit our decoding experiments in this section to the $k$-NN method, which proved the fastest and most accurate in our experiments (for a comparison, see Appx. \ref{appx:extra_results}). Henceforth, this is what we refer to as the RF autoencoder (RFAE).  

As an initial inspection of RFAE's reconstruction behavior, we autoencode the first occurrence of each digit in the MNIST test set for varying latent dimensionalities $d_{\mathcal{Z}} \in \{2, 4, 8, 16, 32\}$ in Fig.~\ref{fig:MNIST_reconstruction}. For this experiment, we fit an (unsupervised) completely random forest \citep{biau_consistency} with $B = 1000$ trees, train the encoder on full training data, project the test samples into $\mathcal Z$ via Nyström, and decode them back using $k = 50$ nearest neighbors. Although RFAEs are not optimized for image data, the reconstructions produce mostly recognizable digits even with very few latent dimensions, with outputs that partially correspond to the wrong class. Best results can be observed at $d_{\mathcal{Z}} = 32$, where the reconstructions appear quite similar to the originals. Additional results examining the influence of other parameters are presented in Appx.~\ref{appx:MNIST_reconstruction}. 



\begin{wrapfigure}{r}{0.5\textwidth} 
  \centering
  \includegraphics[width=0.48\textwidth]{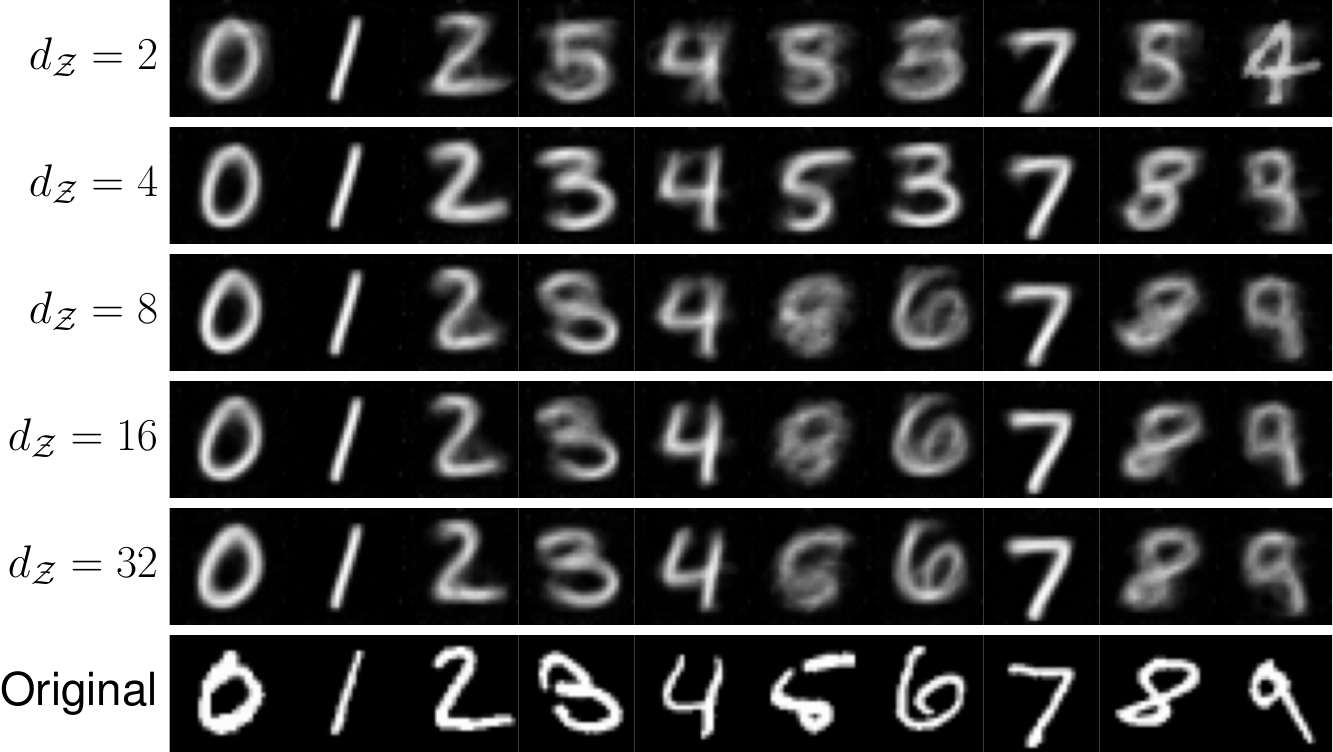}
  \caption{MNIST digit reconstructions with varying latent dimension sizes; original images are displayed in the bottom row.\\}
  \label{fig:MNIST_reconstruction}
\end{wrapfigure}

Next, we compare RFAE's compression-distortion trade-off against two state-of-the-art neural architectures for autoencoding tabular data (TVAE and TTVAE), along with standard and variational autoencoders (AE and VAE, respectively) that are not optimized for this task. 
Although there are some other notable deep learning algorithms designed for tabular data (e.g., CTGAN \citep{ctgan}, TabSyn \citep{tabsyn}, and TabPFN \citep{hollmann_tabPFN}), these do not come with inbuilt methods for decoding back to the input space at variable compression ratios. We also do not include eForest \citep{feng_autoencoder}, another RF based autoencoder, because it only works with a fixed $d_{\mathcal{Z}} = 2 d_\mathcal{X}$ and is not capable of compression. 
For RFAE, we use the unsupervised ARF algorithm \citep{watson_arf} with $500$ trees and set $k=20$ for decoding. 

In standard AEs, reconstruction error is generally estimated via $L_2$ loss. This is not sensible with a mix of continuous and categorical data, so we create a combined measure that evaluates distortion on continuous variables via $1 -R^2$ (i.e., the proportion of variance unexplained) and categorical variables via classification error. Since both measures are on the unit interval, so too is their average across all $d_\mathcal{X}$ features. 

We measure this distortion across a range of compression factors (i.e., inverse compression ratios $d_{\mathcal{Z}}/d_\mathcal{X}$ in 20 benchmark tabular datasets. For more details on these datasets, see Appx. \ref{appx:method}, Table \ref{tab:datasets}. We evaluate performance over ten bootstrap samples at each compression factor, testing on the randomly excluded out-of-bag data. We find that RFAE is competitive in all settings, and has best average performance in 12 out of 20 datasets (see Table \ref{tab:mean_distortion}), for an average rank of 1.80. For a more granular view of performance on each dataset, see Appx. \ref{appx:extra_results}, Fig. \ref{fig:benchmark}.

\paragraph{Denoising}

\begin{wrapfigure}{r}{0.5\textwidth} 
  \centering
  \includegraphics[width=0.48\textwidth]{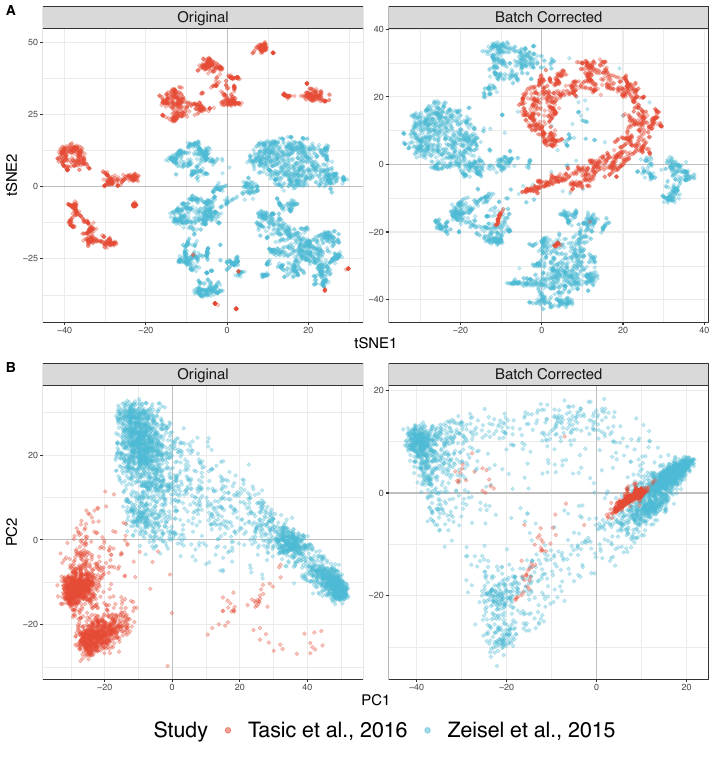}
  \caption{Denoising with RFAE alleviates batch effects in scRNA-seq data.}
  \label{fig:denoising}
\end{wrapfigure}

As a final experiment, we consider a denoising example with single-cell RNA-sequencing (scRNA-seq) data. Pooling results from different labs is notoriously challenging in scRNA-seq due to technical artifacts collectively known as ``batch effects'' \citep{leek_batch_effects2010}. 
We propose to harmonize data across batches by training a RFAE on a large baseline study and passing new samples through the autoencoding pipeline. 

As an illustration, we compare two studies of the mouse brain transcriptome. Using the top $d_\mathcal{X}=5000$ genes, we learn a $d_{\mathcal{Z}}=64$-dimensional embedding of the \citet{zeisel_brain} dataset ($n=2874$). 
Our RF is a completely random forest with $B=1000$ trees.
We project the \citet{tasic_brain} data ($m=1590$) into the latent space and decode using the top $k=100$ nearest neighbors.
Results are presented in Fig. \ref{fig:denoising}. To avoid potential biases from reusing our own embeddings, we compare original and denoised samples using PCA \citep{jolliffe2002} and tSNE \citep{vandermaaten_tsne}, two dimensionality reduction techniques that are widely used in scRNA-seq. In both cases, we find that denoising with RFAE helps align the manifolds, thereby minimizing batch effects.

\section{Discussion}\label{sect:discussion}

The building blocks of our encoding scheme are well established. Breiman himself took a kernel perspective on RFs \citep{breiman2000}, a direction that has been picked up by numerous authors since \citep{davies_rf_kernel, scornet_kernel, mondrian_kernel2016}.
The theory of diffusion maps and KPCA goes back some twenty years \citep{coifman_pnas, coifman_diffusion, scholkopf_kpca}.
However, just as much RF theory has focused on idealized variants of the algorithm \citep{Biau2016}, no prior works appear to have studied the properties of the true RF kernel, opting instead to analyze simpler approximations.
And while there have been some previous attempts to generate RF embeddings \citep{Shi2006, pliakos2018}, these have been largely heuristic in nature. 
By contrast, we provide a principled approach to dimensionality reduction in RFs, along with various novel decoding strategies.


 


One notable difference between our method and autoencoding neural networks is that RFAE is not trained end-to-end. That is, while a deep autoencoder simultaneously learns to encode and decode, RFAE is effectively a post-processing procedure for a pre-trained RF, with independent modules for encoding and decoding. 
We highlight that end-to-end training represents a fundamentally different objective. 
Whereas traditional autoencoders are necessarily unsupervised, our method can be readily applied to regression or classification forests, linking RFAE to supervised dimensionality reduction techniques \citep{fukumizu_dimensionality, bach2005}.
More generally, our goal is not just to learn an efficient representation for its own sake, but rather to reveal the inner workings of a target model. 
One upshot of this decoupling is that our split relabeling and $k$-NN decoders can work in tandem with any valid encoding scheme. For instance, we could relabel an RF's splits to approximate the behavior of sample points in principal component space, or indeed any $\mathcal Z$ for which we have a map $g: \mathcal X \mapsto \mathcal Z$.

We highlight two notable limitations of our approach. 
First, the computational demands of our decoding strategies are nontrivial. (For a detailed analysis, see Appx. \ref{appx:complexity}.) 
Second, when using the $k$-NN approach, results will vary with the choice of $k$. (Experimental results on hyperparameter sensitivity are presented in Appx. \ref{appx:extra_results}.)
However, we observe that autoencoding is a difficult task in general.
Top deep learning models generally pose far greater computational burdens than RFAE, and require many more hyperparameters to govern model architecture and regularization penalties. 
Compared to the leading alternatives, RFAEs are relatively lightweight and user-friendly. 

Autoencoders are often motivated by appeals to the minimum description length principle \citep{rissanen1989, hinton1993autoencoders, grunwald2007}. Information theory provides a precise formalism for modeling the communication game that arises when one agent (say, Alice) wants to send a message to another (say, Bob) using a code that is maximally efficient with minimal information loss. This is another way to conceive of RFAE---as a sort of cryptographic protocol, in which Alice and Bob use a shared key (the RF itself) to encrypt and decrypt messages in the form of latent vectors $\mathbf z$, which is presumably more compact (and not much less informative) than the original message $\mathbf x$.

Several authors have persuasively argued that learning expressive, efficient representations is central to the success of deep neural networks \citep{bengio_representation, Goodfellow-et-al-2016}. Our work highlights that RFs do something very similar under the hood, albeit through entirely different mechanisms. This insight has implications for how we use tree-based ensembles and opens up new lines of research for this function class.



\section{Conclusion}\label{sect:conclusion}

We have introduced novel methods for encoding and decoding data with RFs. 
The procedure is theoretically sound and practically useful, with a wide range of applications including compression, clustering, data visualization, and denoising. Future work will investigate extensions to generative modeling, as well as other tree-based algorithms, such as gradient boosting machines \citep{friedman_boost, xgboost}. Another promising direction is to use the insights from this study to perform model distillation \citep{hinton_distilling, hinton_distilling2}, compressing the RF into a more compact form with similar or even identical behavior. This is not possible with current methods, which still require the original RF to compute adjacencies and look up leaf bounds. 

\section*{Acknowledgements}
This research was supported by the UK Engineering and Physical Sciences Research Council (EPSRC) [Grant reference number EP/Y035216/1] Centre for Doctoral Training in Data-Driven Health (DRIVE-Health) at King's College London and by the German Research Foundation (DFG), Emmy Noether Grant
437611051.

\bibliographystyle{plainnat}
\small{
\bibliography{references} 
}


\newpage
\appendix
\normalsize

\section{Proofs}\label{appx:proofs}

Since several of our results rely on RF regularity conditions, we review these here for completeness.

We say that a sequence of functions $\{f_n\}$ is \textit{universally consistent} if it converges in probability on any target function. 
That is, for any $f^* \in C(\mathcal X)$ and all $\epsilon > 0$, we have:
\begin{align*}
    \lim_{n \rightarrow \infty} P(\lVert f^* - f_n \rVert_\infty > \epsilon) = 0.
\end{align*}
Under certain assumptions, it can be shown that RFs are universally consistent in this sense \citep{Meinshausen2006, denil2014, Biau2016, scornet2016_asymptotics, Wager2018}. Specifically, we assume:
\begin{itemize}[noitemsep]
    \item[(A1)] Training data for each tree is split into two subsets: one to learn split parameters, the other to assign leaf labels.
    \item[(A2)] Trees are grown on subsamples rather than bootstraps, with subsample size $n^{(b)}$ satisfying $n^{(b)} \rightarrow \infty, n^{(b)} / n \rightarrow 0$ as $n \rightarrow \infty$.
    \item[(A3)] At each internal node, the probability that a tree splits on any given $X_j$ is bounded from below by some $\rho > 0$.
    \item[(A4)] Every split puts at least a fraction $\gamma \in (0, 0.5]$ of the available observations into each child node.
    \item[(A5)] For each tree $b \in [B]$, the total number of leaves $d^{(b)}_\Phi$ satisfies $d^{(b)}_\Phi \rightarrow \infty, d^{(b)}_\Phi / n \rightarrow 0$ as $n \rightarrow \infty$.
\end{itemize}
Under (A1)-(A5), decision trees satisfy the criteria of Stone's theorem \citep{stone1977} and are therefore universally consistent (see \citet[Thm.~6.1]{devroye1996} and \citet[Thm.~4.2]{gyorfi2002}). The consistency of the ensemble follows from the consistency of the basis functions \citep{biau_consistency}. There is some debate in the literature as to whether these assumptions are necessary for universal consistency---(A1) and (A2) in particular may be overly strong---but they are provably sufficient. See \citet[Rmk.~8]{biau_analysis}, \citet[Appx.~B]{Wager2018}, and \citet{tang2018} for a discussion.

\subsection{Proof of Thm. \ref{thm:rf_kernel} (RF kernel properties)}\label{appx:proof_kernel_properties}

This theorem makes three separate claims: that RF kernels are (a) PSD and stochastic; (b) asymptotically universal; and (c) asymptotically characteristic. 

\paragraph{(a) PSD} Take PSD first. It is well known that any convex combination of PSD kernels is PSD \citep{scholkopf_book}, so to secure part (a) it is sufficient to prove that the standard decision tree kernel is PSD. This is simply a normalized indicator kernel:
\begin{align*}
    k^{DT}(\mathbf x, \mathbf x') = \frac{k^{(b)}(\mathbf x, \mathbf x')}{\sum_{i=1}^n k^{(b)}(\mathbf x, \mathbf x_i)},
\end{align*}
which either evaluates to zero (if the samples do not colocate) or the reciprocal of the leaf sample size (if they do).

To show that $k^{DT}$ is PSD, we take a constructive approach in which we explicitly define the canonical feature map $\phi: \mathcal X \mapsto \mathcal H$, which maps input vectors to an inner product space $\mathcal H$. This suffices to establish the PSD property, since for any finite dataset the resulting kernel matrix $\mathbf K^{DT} \in [0,1]^{n \times n}$ is a Gram matrix with entries $k^{DT}_{ij} = \langle \phi(\mathbf x_i), \phi(\mathbf x_j) \rangle$. 
As described in Sect. \ref{sect:decoding}, the DT feature map for tree $b$ is given by:
\begin{align*}
    \phi^{(b)}(\mathbf x) = \pi^{(b)}(\mathbf x) \odot \sqrt{\mathbf s^{(b)}},
\end{align*}
where $\pi^{(b)}: \mathcal X \mapsto \{0,1\}^{d_\Phi^{(b)}}$ is a standard basis vector indicating which leaf $\mathbf x$ routes to in tree $b$, and $\mathbf s \in \{1/[n-1]\}^{d^{(b)}_\Phi}$ is a vector of corresponding inverse leaf sample sizes. Concatenating these maps over $B$ trees and taking the inner product for sample pairs, we get an explicit formula for the RF feature map, thereby establishing that $k^{RF}$ is PSD.

It may not be immediately obvious that Eq. \ref{eq:rf_kernel} is equivalent to the scaled inner product $B^{-1} \big \langle \phi(\mathbf x), \phi(\mathbf x') \big \rangle$. For completeness, we derive the identity. Expanding the inner product, we have:
\begin{align*}
    \big\langle \phi(\mathbf x), \phi(\mathbf x') \big\rangle 
    &= \sum_{b=1}^B ~\big \langle \phi^{(b)}(\mathbf x), \phi^{(b)}(\mathbf x') \big \rangle\\
    &= \sum_{b=1}^B ~\Big \langle \pi^{(b)}(\mathbf x) \odot \sqrt{\mathbf s^{(b)}},    \pi^{(b)}(\mathbf x') \odot \sqrt{\mathbf s^{(b)}} \Big \rangle\\
    &= \sum_{b=1}^B \sum_{\ell=1}^{L_b} ~\pi_\ell^{(b)}(\mathbf x) ~\pi_\ell^{(b)}(\mathbf x') ~s_\ell^{(b)}.
\end{align*}
If both $\mathbf x$ and $\mathbf x'$ fall in leaf $\ell$ of tree $b$, then $\pi_\ell^{(b)}(\mathbf x) ~\pi_\ell^{(b)}(\mathbf x')=1$; otherwise, the product evaluates to $0$. Let $s_{\text{leaf}}^{(b)}$ denote the inverse sample size of the leaf containing $\mathbf x$ in tree $b$. Then $\big \langle \phi^{(b)}(\mathbf x), \phi^{(b)}(\mathbf x') \big \rangle = s_{\text{leaf}}^{(b)}$ if $\mathbf x, \mathbf x'$ fall in the same leaf of tree $b$, or $0$ otherwise. Note that we have:
\begin{align*}
    s_{\text{leaf}}^{(b)} = \frac{1}{\sum_{i=1}^n k^{(b)}(\mathbf x, \mathbf x_i)},
\end{align*}
since the number of training points in the same leaf as $\mathbf x$ is given by $\sum_{i=1}^n k^{(b)}(\mathbf x, \mathbf x_i)$. Substituting and dividing by $B$ gives:
\begin{align*}
    \frac{1}{B} \big\langle \phi(\mathbf x), \phi(\mathbf x') \big\rangle = \frac{1}{B} \sum_{b=1}^B \frac{k^{(b)}(\mathbf x, \mathbf x')}{\sum_{i=1}^n k^{(b)}(\mathbf x, \mathbf x_i)},
\end{align*}
which completes the derivation.

Part (a) makes an additional claim, however---that $k^{RF}_n$ is \textit{stochastic}. This means that, for any $\mathbf x \in \mathcal X$, the kernel $k_n^{RF}(\mathbf x, \mathbf x_i)$ defines a probability mass function over the training data as $i$ ranges from 1 to $n$.
It is easy to see that the kernel is nonnegative, as all entries are either zero (if samples do not colocate) or some positive fraction representing average inverse leaf sample size across the forest (if they do). All that remains then is to show that the values sum to unity. Consider the sum over all $n$ training points:
\begin{align*}
\sum_{i=1}^n k_n^{RF}(\mathbf x, \mathbf x_i)
&= \sum_{i=1}^n \frac{1}{B} \sum_{b=1}^B \left( \frac{k^{(b)}(\mathbf x, \mathbf x_i)}{\sum_{j=1}^n k^{(b)}(\mathbf x, \mathbf x_j)} \right) \\
&= \frac{1}{B} \sum_{b=1}^B \sum_{i=1}^n \left( \frac{k^{(b)}(\mathbf x, \mathbf x_i)}{\sum_{j=1}^n k^{(b)}(\mathbf x, \mathbf x_j)} \right) \\
&= \frac{1}{B} \sum_{b=1}^B \frac{\sum_{i=1}^n k^{(b)}(\mathbf x, \mathbf x_i)}{\sum_{j=1}^n k^{(b)}(\mathbf x, \mathbf x_j)} \\
&= \frac{1}{B} \sum_{b=1}^B 1 \\
&= 1.
\end{align*}
Since the kernel is symmetric, the training matrix $\mathbf K \in [0,1]^{n \times n}$ of kernel entries is doubly stochastic (i.e., all rows and columns sum to one).

\paragraph{(b) Universal} 

Recall that the Moore-Aronszajn theorem tells us that every PSD kernel defines a unique RKHS  \citep{aronszajn1950}. 
Given that $k_n^{RF}$ is PSD, universality follows if we can show that the associated RKHS $\mathcal H$ is dense in $C(\mathcal X)$. Of course, this is provably false at any fixed $n$, as $d_\Phi = o(n)$ by (A5), and a finite-dimensional $\mathcal H$ necessarily contains ``gaps''---i.e., some functions $f^* \in C(\mathcal X)$ such that $\langle f^*, h \rangle=0$ for all $h \in \mathcal H$. 

However, as $n$ and $d_\Phi$ grow, these gaps begin to vanish. 
Since these two parameters increase at different rates, and it is the latter that more directly controls function complexity, we interpret the subscript $\ell$ on $\mathcal H_\ell$ as indexing the leaf count.
(As noted above, we focus on the single tree case, as ensemble consistency follows immediately from this.) 

The following lemma sets up our asymptotic universality result.
\begin{lemma}[RF subalgebra]\label{lemma:subalgebra}
    Let $\mathcal H_\ell$ denote the set of continuous functions on the compact metric space $\mathcal X$ representable by a tree with $\ell$ leaves, trained under regularity conditions (A1)-(A5). Define:
    \begin{align*}
        \mathcal A := \bigcup_{\ell=1}^\infty \mathcal H_\ell.
    \end{align*}
    Then $\mathcal A$ is a subalgebra of $C(\mathcal X)$ that contains the constant function and separates points.
\end{lemma}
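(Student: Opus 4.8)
The plan is to prove the lemma as precisely the hypothesis-checking half of a Stone--Weierstrass argument: once $\mathcal{A}$ is established to be a point-separating subalgebra of $C(\mathcal{X})$ that contains the constants, the Stone--Weierstrass theorem will (downstream, in the rest of Thm.~\ref{thm:rf_kernel}(b)) hand us $\overline{\mathcal{A}} = C(\mathcal{X})$ in the uniform norm, which together with (A5) and the universal consistency of CART under (A1)--(A5) yields asymptotic universality. So three things need verifying, and I would take them in increasing order of difficulty: (i) that $\mathcal{A}$ contains the constant function; (ii) that $\mathcal{A}$ separates points; (iii) that $\mathcal{A}$ is closed under the algebra operations.

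The first two steps are short. A tree with a single leaf (no splits) represents exactly the constant functions, so the constant $1$ lies in $\mathcal{H}_1 \subseteq \mathcal{A}$. For separation, given $\mathbf{x} \ne \mathbf{x}'$ there is a coordinate $j$ with (say) $x_j < x_j'$; a single split of the form $X_j < t$ with $t \in (x_j, x_j')$ defines a two-leaf tree, and the function equal to $0$ on the left cell and $1$ on the right cell belongs to $\mathcal{H}_2 \subseteq \mathcal{A}$ and takes different values at $\mathbf{x}$ and $\mathbf{x}'$. The third step is the structural heart of the argument: let $f \in \mathcal{H}_{\ell_1}$ and $g \in \mathcal{H}_{\ell_2}$ be constant on the cells of the axis-aligned partitions $\mathcal{P}_1, \mathcal{P}_2$ induced by two trees. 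The key observation is that the common refinement $\mathcal{P}_1 \vee \mathcal{P}_2$ is itself tree-induced: graft a copy of the second tree's split sequence beneath each leaf of the first, producing a tree with at most $\ell_1 \ell_2$ leaves. Since each cell of $\mathcal{P}_1 \vee \mathcal{P}_2$ lies in a single cell of $\mathcal{P}_1$ and a single cell of $\mathcal{P}_2$, the functions $f+g$, $\lambda f$, and $fg$ are all constant on the cells of the refinement, hence all lie in $\mathcal{H}_{\ell_1 \ell_2} \subseteq \mathcal{A}$. This, with the previous two points, is exactly the assertion of the lemma.

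The main obstacle is conceptual rather than computational: tree prediction functions are piecewise constant and therefore discontinuous across cell boundaries, so one must be careful about the sense in which $\mathcal{H}_\ell \subseteq C(\mathcal{X})$. The remedy I would adopt is to note that the regularity conditions (A1)--(A5) constrain the randomized \emph{training} procedure, not the ambient family of achievable axis-aligned partitions; since the feature distribution is continuous, arbitrarily fine box partitions are realizable as $\ell \to \infty$, and, by uniform continuity of any $f^* \in C(\mathcal{X})$ on the compact set $\mathcal{X}$, the uniform closure of $\mathcal{A}$ already contains every continuous function --- whether one models each element of $\mathcal{H}_\ell$ as the raw step function or as a continuous interpolant agreeing with it off an $\epsilon$-neighborhood of the boundary. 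With this bookkeeping fixed, the grafting construction and the separating single split above are manifestly legitimate trees, with thresholds placed generically enough that the balance condition (A4) and the split-probability condition (A3) can be met by arbitrarily small perturbations that leave the represented function unchanged up to sup-norm $\epsilon$. I would expect the referee-facing effort to sit precisely here: making the continuity reconciliation airtight and confirming that (A4) does not obstruct the refinement step.
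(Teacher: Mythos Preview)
Your overall strategy matches the paper's: verify the three Stone--Weierstrass hypotheses for $\mathcal{A}$ (subalgebra, contains constants, separates points), then invoke the theorem downstream. The treatment of constants (trivial one-leaf tree) and of the subalgebra property (common refinement of the two partitions, which you make explicit via the grafting construction yielding at most $\ell_1\ell_2$ leaves) are essentially identical to the paper's, though you are more careful about the continuity bookkeeping than the paper, which simply asserts that each $\mathcal{H}_\ell$ consists of ``continuous, piecewise constant functions'' without addressing the apparent tension.

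The genuine divergence is in the point-separation step. You give a direct, elementary construction: pick a coordinate $j$ where $\mathbf{x}$ and $\mathbf{x}'$ differ, place a single split $X_j < t$ between them, and exhibit a two-leaf tree in $\mathcal{H}_2$. The paper instead invokes \citet[Lemma~2]{Meinshausen2006}: under (A3) and (A4) the leaf diameter vanishes in probability, so for any two distinct points there exists some $\ell$ and some $f_\ell \in \mathcal{H}_\ell$ that routes them to different leaves. Your route is shorter and avoids an external citation, but the paper's route is more faithful to the phrase ``trained under regularity conditions (A1)--(A5)'' in the definition of $\mathcal{H}_\ell$: the Meinshausen argument uses (A3)--(A4) as \emph{hypotheses} rather than obstacles, whereas your two-leaf construction must additionally be shown to respect the balance condition (A4) at the chosen threshold---the perturbation fix you sketch is plausible but, as you yourself flag, is exactly where the work sits. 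Both arguments are viable; the paper's trades elementarity for a cleaner fit with the stated regularity regime.
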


\begin{proof}
The lemma makes three claims, each of which we verify in turn. 

\begin{enumerate}[label=\textnormal{(\arabic*)}]

    \item \textit{$\mathcal A$ is a subalgebra of $C(\mathcal X)$.}

    Each $\mathcal H_\ell$ consists of continuous, piecewise constant functions on $\mathcal X$, induced by recursive binary partitions of $\mathcal X$ into $\ell$ leaf regions. These spaces are closed under addition, scalar multiplication, and multiplication, as the sum or product of two piecewise constant functions is piecewise constant over the common refinement of their partitions. Since $\mathcal A$ is the union of these $\mathcal H_\ell$, it is closed under addition, multiplication, and scalar multiplication.

    \item \textit{$\mathcal A$ contains the constant functions.}

    This follows immediately, as any tree (and hence any $\mathcal H_\ell$) can represent constant functions---for instance, a trivial tree with no splits assigns the same value to all points.

    \item \textit{$\mathcal A$ separates points.}

    By regularity conditions (A3) and (A4), every coordinate has a nonzero probability $\rho > 0$ of being selected for splitting at any node, and every split allocates at least a fraction $\gamma \in (0, 0.5]$ of the points to each child node. 
    \citet[Lemma~2]{Meinshausen2006} has shown that, under these conditions, the diameter of each leaf goes to zero in probability. This amount to an asymptotic injectivity guarantee. 
    Since $\mathcal X$ is compact, for any pair of distinct points $\mathbf x, \mathbf x' \in \mathcal X$, there exists an index $\ell$ such that some $f_\ell \in \mathcal H_\ell$ assigns different values to $\mathbf x$ and $\mathbf x'$. In other words, some tree in $\mathcal A$ is guaranteed to route $\mathbf x$ and $\mathbf x'$ to different leaves.
\end{enumerate}
\end{proof}

We now invoke the Stone-Weierstrass theorem to conclude the density of $\mathcal A$.

\begin{theorem}[Stone--Weierstrass]
Let $\mathcal X$ be a compact Hausdorff space. If a subalgebra $\mathcal A$ of $C(\mathcal X)$ contains the constant functions and separates points, then $\mathcal A$ is dense in $C(\mathcal X)$ with respect to the uniform norm.
\end{theorem}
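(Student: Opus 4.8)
The plan is to prove the Stone--Weierstrass theorem by the classical two-stage argument: first promote the algebraic structure to a \emph{lattice} structure by passing to the uniform closure, then deduce density through a double compactness argument over point-separating elements. As a preliminary step I would observe that the uniform closure $\overline{\mathcal A}$ is again a subalgebra of $C(\mathcal X)$ that contains the constants and separates points, since addition, scalar multiplication, and pointwise multiplication are continuous in the uniform norm on bounded sets; it therefore suffices to prove $\overline{\mathcal A} = C(\mathcal X)$.

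The first key step is to show that $\overline{\mathcal A}$ is closed under the pointwise lattice operations $\max$ and $\min$. Via the identities $\max(f,g) = \tfrac{1}{2}\big(f + g + |f - g|\big)$ and $\min(f,g) = \tfrac{1}{2}\big(f + g - |f - g|\big)$, this reduces to the claim that $f \in \overline{\mathcal A}$ implies $|f| \in \overline{\mathcal A}$. I would establish this by constructing polynomials that approximate $\sqrt{\,\cdot\,}$ on $[0,1]$: with $q_0 \equiv 0$ and $q_{n+1}(t) = q_n(t) + \tfrac{1}{2}\big(t - q_n(t)^2\big)$, one checks by induction that $0 \le q_n(t) \le q_{n+1}(t) \le \sqrt{t}$ on $[0,1]$, so $q_n \uparrow \sqrt{t}$ pointwise and hence uniformly by Dini's theorem. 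Rescaling by $\lVert f \rVert_\infty^2$ and applying these polynomials to $f^2 \in \overline{\mathcal A}$ yields polynomials in $f$ — elements of $\overline{\mathcal A}$, since it is an algebra containing the constants — that converge uniformly to $\sqrt{f^2} = |f|$.

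The second step is the lattice form of the theorem. Fix $g \in C(\mathcal X)$ and $\epsilon > 0$. For each ordered pair $x, y \in \mathcal X$ I would produce $h_{x,y} \in \overline{\mathcal A}$ with $h_{x,y}(x) = g(x)$ and $h_{x,y}(y) = g(y)$: when $x = y$ take the constant $g(x)$, and when $x \neq y$ pick $p \in \mathcal A$ with $p(x) \neq p(y)$ and let $h_{x,y}$ be the affine combination of $1$ and $p$ that matches $g$ at the two points. Fixing $x$, the sets $\{z \in \mathcal X : h_{x,y}(z) < g(z) + \epsilon\}$ are open and cover $\mathcal X$ as $y$ varies; taking a finite subcover and the corresponding lattice minimum gives $h_x \in \overline{\mathcal A}$ with $h_x < g + \epsilon$ everywhere and $h_x(x) = g(x)$. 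Now the sets $\{z \in \mathcal X : h_x(z) > g(z) - \epsilon\}$ are open and cover $\mathcal X$ as $x$ varies; a second finite subcover together with the corresponding lattice maximum produces $h \in \overline{\mathcal A}$ with $g - \epsilon < h < g + \epsilon$, i.e. $\lVert h - g \rVert_\infty < \epsilon$. Since $\epsilon$ was arbitrary, $g \in \overline{\mathcal A}$, and therefore $\overline{\mathcal A} = C(\mathcal X)$.

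I expect the main obstacle to be the first step — specifically, producing the polynomial approximation of $|t|$ (equivalently $\sqrt{t}$) in a self-contained way rather than circularly invoking the Weierstrass approximation theorem, and verifying the monotone, uniformly convergent behavior of the recursion $q_n$ (the useful identity being $\sqrt{t} - q_{n+1}(t) = \big(\sqrt{t} - q_n(t)\big)\big(1 - \tfrac{1}{2}(\sqrt{t} + q_n(t))\big)$, which drives the induction on $[0,1]$). The double compactness argument of the second step is routine but order-sensitive: one must take the inner minimum first to secure the upper bound over all of $\mathcal X$, and only then the outer maximum to restore the lower bound without disturbing it.
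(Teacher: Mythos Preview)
Your proof is a correct, self-contained rendition of the classical lattice proof of Stone--Weierstrass: closure under $|\cdot|$ via the Newton-type recursion $q_{n+1}(t)=q_n(t)+\tfrac12(t-q_n(t)^2)$ together with Dini's theorem, followed by the standard double compactness argument. The paper, however, does not prove this theorem at all; it simply states it as a classical result and invokes it to conclude that the union $\mathcal A=\bigcup_\ell \mathcal H_\ell$ from Lemma~\ref{lemma:subalgebra} is dense in $C(\mathcal X)$. So there is no ``paper's proof'' to compare against---you have supplied considerably more than the authors did, and what you supplied is sound.
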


Combining Lemma \ref{lemma:subalgebra} with this classical result, we conclude that the sequence $\{\mathcal H_\ell\}$ converges on a universal RKHS.


\paragraph{(c) Characteristic} Item (c) follows from (b) under our definition of universality. This was first shown by \citet[Thm.~3]{gretton_2sample} in the non-asymptotic regime (although \citet{sriperumbudur2011} prove that the properties can come apart under slightly different notions of universality). 
We adapt the result simply by substituting a sufficiently close approximation in $\mathcal H$, where proximity is defined w.r.t. the supremum norm. 
Since the existence of such a close approximation is guaranteed by (b), the sequence $\{\mathcal H_\ell\}$ is asymptotically characteristic.

\subsection{Proof of Thm. \ref{thm:oracle} (Oracle consistency)}\label{appx:proof_oracle}

This result follows trivially from the universal consistency of the RF algorithm itself.
Any partition of $\mathcal X$ that satisfies regularity conditions (A1)-(A5) converges on the true joint distribution $P_X$ as $n, d_\Phi \rightarrow \infty$ (see \citep[Thm.~1]{lugosi_1996} and \citep[Thm.~2]{Correia2020}). 
Resulting leaves have shrinking volume, effectively converging on individual points $\mathbf x$ with coverage proportional to the density $p(\mathbf x)$. 
Because there are no gaps in these leaves (i.e., no subregions of zero density), a weighted mixture of uniform draws---with weights given by the leaf coverage---is asymptotically equivalent to sampling from $P_X$ itself.
A leaf assignment oracle would therefore be guaranteed to map each latent vector $\mathbf z \in \mathcal Z$ to the corresponding input point $\mathbf x \in \mathcal X$, since leaf assignments effectively determine feature values in the large sample limit.

\subsection{Proof of Thm. \ref{thm:unique} (Uniqueness)}\label{appx:proof_unique}

It is immediately obvious that when $\hat{\mathbf K}_0=\mathbf K^*_0$, the true leaf assignment matrix $\bm \Psi^*$ drives the ILP objective to zero and automatically satisfies the one-hot and overlap constraints. 
Our task, therefore, is to demonstrate that no other binary matrix $\hat{\bm \Psi} \neq \bm \Psi^*$ can do the same.

For simplicity, consider just a single test point ($m=1$). Let $\mathcal F \subset \{0,1\}^{d_\Phi}$ denote the feasible region of leaf assignment vectors, i.e. all and only those that satisfy the one-hot and overlap constraints. A sufficient condition for our desired uniqueness result is that no two feasible leaf assignments produce the same kernel values. More formally, if there exist no $\bm \psi, \bm \psi' \in \mathcal F$ such that $\bm \Pi \mathbf S \bm \psi^{\top} = \bm \Pi \mathbf S \bm \psi^{'\top}$, then the map $\bm \psi \mapsto \bm \Pi \mathbf S \bm \psi^{\top}$ is injective over $\mathcal F$, in which case only a single solution can minimize the ILP objective.

Note that this injectivity property cannot be shown to hold in full generality. As a minimal counterexample, consider a case where the feature space is a pair of binary variables $\mathcal X = \{0,1\}^2$ and our forest contains just two trees, the first placing a single split on $X_1$ and the second placing a single split on $X_2$. Now, say our training data comprises just two points, $\mathbf x = [0, 0]$ and $\mathbf x' = [1, 1]$. We observe the kernel entries $k(\mathbf x_0, \mathbf x) = k(\mathbf x_0, \mathbf x') = 1/2$. 
In this case, we know that $\mathbf x_0$ colocates with each training sample exactly once, and therefore that it shares exactly one coordinate with each of our two training points. However, we do not have enough information to determine where these colocations occur, since both $[0,1]$ and $[1,0]$ are plausible feature vectors for $\mathbf x_0$.

Such counterexamples become increasingly rare as the forest grows more complex. As previously noted, under (A3) and (A4), leaf diameter vanishes in probability \citep[Lemma~2]{Meinshausen2006}. This is why random forests are asymptotically injective, a property we exploited in the proof for part (b) of Thm. \ref{thm:rf_kernel}. 
A corollary of Meinshausen's lemma is that for all distinct points $\mathbf x, \mathbf x' \in \mathcal X$:
\begin{align*}
    \lim_{n \rightarrow \infty} P\big( f_n(\mathbf x) = f_n(\mathbf x') \big) = 0.
\end{align*}
By modus tollens, it follows that if function outputs converge on the same value, then corresponding inputs must be identical. This is important, since with fixed training labels $Y$, model predictions are fully determined by kernel evaluations via Eq. \ref{eq:rf_formula}. If two distinct vectors $\bm \psi, \bm \psi'$ produce identical values when left-multiplied by $\bm \Pi \mathbf S$, then the corresponding inputs $\mathbf x, \mathbf x'$ cannot be separated by $f_n$. 
Therefore, with probability tending toward 1 as sample size grows, we conclude that the ILP of Eq. \ref{eq:decode} is uniquely solved by the true leaf assignments.


\subsection{Proof of Thm. \ref{thm:knn} ($k$-NN consistency)}\label{appx:proof_knn}

To secure the result, we must show that
\begin{itemize}
    \item[(a)] $\tilde{\mathbf x} \overset{p}{\rightarrow} \mathbf x$: sampling uniformly from the intersection of each sample's assigned leaf regions asymptotically recovers the original inputs; and
    \item[(b)] $\hat{\mathbf x} \overset{p}{\rightarrow} \tilde{\mathbf x}$: $k$-NN regression in the spectral embedding space is universally consistent.
\end{itemize}
Item (a), which amounts to a universal consistency guarantee for the eForest method of \citet{feng_autoencoder}, follows immediately from our oracle consistency result (Thm. \ref{thm:oracle}), plugging in the identity function for $g$.
Since we know the true leaf assignments for each training sample, under universal consistency conditions for RFs, we can reconstruct the data by taking uniform draws from all leaf intersections. 

Item (b) follows from the properties of the $k$-NN algorithm, which is known to be universally consistent as $k \rightarrow \infty, k/n \rightarrow 0$ \citep{stone1977}.

\section{Experiments}\label{appx:exp}

RFs are trained either using the \texttt{ranger} package \citep{ranger}, or the \texttt{arf} package \citep{watson_arf}, which also returns a trained forest of class \texttt{ranger}. 
Truncated eigendecompositions are computed using \texttt{RSpectra}.
Memory-efficient methods for sparse matrices are made possible with the \texttt{Matrix} package. 
We use the \texttt{RANN} package for fast $k$-NN regression with kd-trees.
For standard lasso, we use the \texttt{glmnet} package \citep{friedman2010}, and \texttt{ExclusiveLasso} for the exclusive variant \citep{campbell_xlasso}. In both cases, we do not tune the penalty parameter $\lambda$, but simply leave it fixed at a small value ($0.0001$ in our experiments). This is appropriate because we are not attempting to solve a prediction problem, but rather a ranking problem over coefficients.

\subsection{Reconstruction benchmark}\label{appx:method}
We describe the setup for the experiments presented in the main text, Sect. \ref{sect:exp}.

For the compression reconstruction benchmark, we use 20 datasets sourced from the UCI Machine Learning Repository \citep{Dua:2019}, OpenML \citep{OpenML2013}, Kaggle, and the \texttt{R} \texttt{palmerpenguins} \citep{plpn} package. In each dataset, we remove all rows with missing values. We also remove features that are not relevant to this task (e.g., customer names), features that duplicate in meaning (e.g., \texttt{education} and \texttt{education\_num} in \texttt{adult} ) and date-time features. We report the subsequent in Table \ref{tab:datasets}, describing for each used dataset the number of rows, features, and the proportion of categorical features in the total feature set, as a measure of the 'mixed'-ness of the tabular data.

\begin{table}[t] \small
    \centering
    \caption{Summary of datasets used. \% Categorical indicates the proportion of categorical features. \# Classes indicates the total cardinality of all the categorical features.}
    \begin{adjustbox}{max width=\textwidth}
    \begin{tabular}{l l r r r r r r}
        \toprule
        \textbf{Dataset} & \textbf{Code} & \textbf{\#Samples} & \textbf{\#Numerical} & \textbf{\#Categorical} & \textbf{\# Total} & \textbf{\%Categorical} & \textbf{\#Classes} \\
        \midrule
        \href{https://archive.ics.uci.edu/dataset/1/abalone}{Abalone} & \texttt{abalone} & 4177 & 8 & 1 & 9 & 0.11 & 3 \\
        \href{https://archive.ics.uci.edu/dataset/2/adult}{Adult} & \texttt{adult} & 45222 & 5 & 9 & 14 & 0.64 & 100 \\
        \href{https://www.openml.org/search?type=data&sort=runs&id=1462&status=active}{Banknote Auth.} & \texttt{banknote} & 1372 & 4 & 1 & 5 & 0.20 & 2 \\
        \href{https://archive.ics.uci.edu/dataset/17/breast+cancer+wisconsin+diagnostic}{Breast Cancer} & \texttt{bc} & 570 & 30 & 1 & 32 & 0.03 & 2 \\
        \href{https://archive.ics.uci.edu/dataset/19/car+evaluation}{Car} & \texttt{car} & 1728 & 0 & 7 & 7 & 1.00 & 25 \\
        \href{https://www.openml.org/search?type=data&status=active&id=46911}{Bank Cust. Churn} & \texttt{churn} & 10000 & 6 & 5 & 11 & 0.56 & 11 \\
        \href{https://www.openml.org/search?type=data&status=active&sort=nr_of_downloads&id=31}{German Credit} & \texttt{credit} & 1000 & 7 & 14 & 21 & 0.67 & 56 \\
        \href{https://archive.ics.uci.edu/dataset/34/diabetes}{Diabetes} & \texttt{diabetes} & 768 & 8 & 1 & 9 & 0.11 & 2 \\
        \href{https://archive.ics.uci.edu/dataset/602/dry+bean+dataset}{Dry Bean} & \texttt{dry\_bean} & 13611 & 16 & 1 & 17 & 0.06 & 7 \\
        \href{https://archive.ics.uci.edu/dataset/162/forest+fires}{Forest Fires} & \texttt{forestfires} & 517 & 11 & 2 & 13 & 0.15 & 19 \\
        \href{https://archive.ics.uci.edu/dataset/45/heart+disease}{Heart Disease} & \texttt{hd} & 298 & 7 & 7 & 14 & 0.50 & 23 \\
        \href{https://www.kaggle.com/datasets/harlfoxem/housesalesprediction}{King County Housing} & \texttt{king} & 21613 & 19 & 0 & 19 & 0.00 & 0 \\
        \href{https://www.openml.org/search?type=data&status=active&id=1461}{Bank Marketing} & \texttt{marketing} & 45211 & 7 & 10 & 17 & 0.59 & 44 \\
        \href{https://archive.ics.uci.edu/dataset/73/mushroom}{Mushroom} & \texttt{mushroom} & 8124 & 0 & 22 & 22 & 1.00 & 119 \\
        \href{https://archive.ics.uci.edu/dataset/544/estimation+of+obesity+levels+based+on+eating+habits+and+physical+condition}{Obesity Levels} & \texttt{obesity} & 2111 & 8 & 8 & 16 & 0.50 & 30 \\
        \href{https://allisonhorst.github.io/palmerpenguins/}{Palmer Penguins} & \texttt{plpn} & 333 & 5 & 3 & 8 & 0.38 & 8 \\
        \href{https://archive.ics.uci.edu/dataset/94/spambase}{Spambase} & \texttt{spambase} & 4601 & 58 & 1 & 59 & 0.02 & 2 \\
        \href{https://archive.ics.uci.edu/dataset/320/student+performance}{Student Performance} & \texttt{student} & 649 & 16 & 17 & 33 & 0.52 & 43 \\
        \href{https://www.kaggle.com/datasets/blastchar/telco-customer-churn}{Telco Churn} & \texttt{telco} & 7032 & 3 & 17 & 20 & 0.85 & 43 \\
        \href{https://archive.ics.uci.edu/dataset/186/wine+quality}{Wine Quality} & \texttt{wq} & 4898 & 12 & 0 & 12 & 0.00 & 0 \\
        \bottomrule
    \end{tabular}
    \end{adjustbox}
    \label{tab:datasets}
\end{table}

Then, our pipeline proceeds as follows:
\begin{itemize}
    \item For each dataset, we take ten different bootstrap samples, to form ten training sets, and use the remaining out-of-bag data as the testing set. We do this to ensure data is not unnecessarily unused, which could skew results in datasets with a small $n$.
    \item For each dataset, we define 10 compression ratios, or latent rates, going uniformly from 0.1 (10\% of the number of features) to 1 (100\% the number of features). For each value, we find $d_{\mathcal{Z}} = d_\mathcal X \times \text{latent\_rate}$
    \item For each dataset and bootstrap, we run each method with the specified $d_{\mathcal{Z}}$ for the dimensionality of their latent embeddings. This involves training the method on the bootstrap training data, then passing the testing data through its encoding and decoding stages to produce a reconstruction.
    \item We then compute the distortion of these reconstructions compared to the original test samples, using metrics described in the main text. For each bootstrap and $d_{\mathcal{Z}}$ on a dataset, we aggregate the results into a mean, and report the standard error as error bars. This standard error represents a combination of any stochastic component of the methods used, as well as the finite sample uncertainty of the used data.
\end{itemize}
Within this specification, we run each method $20 \times 10 \times 10 = 2000$ times, and for 5 methods this balloons to 10,000 runs. 
To meet these computational demands, we run these experiments from a high-performance computing partition, with 12 AMD EPYC 7282 CPUs, 64GB RAM, and an NVIDIA A30 graphics card. These high-performance computing units were used as part of King's College London's CREATE HPC \citep{kcl_create}

Next, we describe in detail each of the methods used in the reconstruction benchmark:
\begin{itemize}
    \item \textbf{TVAE \& TTVAE:} For both the TVAE and TTVAE models, we do not make any changes to the underlying model architecture. However, we make minor modifications to not perform any sampling/interpolation at the latent stage, and simply decode the same embeddings that we acquired from the encoder.

    \item \textbf{Autoencoder:} We use an MLP-based autoencoder for this benchmark. This is in contrast to a CNN-based autoencoder, which is more suited to image data, but is not relevant in tabular data tasks, where there is no inherent structure that could be extracted by convolution kernels. We structure our network to have five hidden layers, where the size of these layers are adaptive to $d_\mathcal X$ and $d_{\mathcal{Z}}$. In particular, we want to structure the network such that the size of each hidden layer reduces uniformly from $d_\mathcal X$ to $d_{\mathcal{Z}}$ at encoding and increases uniformly from $d_{\mathcal{Z}}$ to $d_\mathcal X$ at decoding. Our structure then is: 
    
    Input($d_\mathcal X$) $\rightarrow$ Dense($d_\mathcal X - (d_\mathcal X - d_{\mathcal{Z}}) \times 1/3$) $\rightarrow$ Dense($d_\mathcal X - (d_{\mathcal{Z}} - d_{\mathcal{Z}}) \times 2/3$) $\rightarrow$ Latent($d_{\mathcal{Z}}d_{\mathcal{Z}}$) $\rightarrow$ Dense($d_\mathcal X - (d_\mathcal X - d_{\mathcal{Z}}) \times 2/3$) $\rightarrow$ Dense($d_\mathcal X - (d_\mathcal X - d_{\mathcal{Z}}) \times 1/3$) $\rightarrow$ Output($d_\mathcal X$).
    
    If $d_\mathcal X = 8, d_{\mathcal{Z}} = 2$ then with this rule the network will be $8 \rightarrow 6 \rightarrow 4 \rightarrow 2 \rightarrow 4 \rightarrow 6 \rightarrow 8$.
    
    For hyperparameters, we use common defaults: \texttt{epochs = 50}, \texttt{optimizer = ADAM}. We use ReLU activations at hidden layers, a sigmoid for the output, and a random 10\% validation set from the training data.
    
    \item \textbf{Variational Autoencoder:} Similar to the autoencoder, we use an MLP-based variational autoencoder. For comparison, we mimic the autoencoder's architecture, activation function, and defaults, only changing \texttt{epochs = 100} and adding \texttt{batch\_size = 32}. We also impose a $\beta$ coefficient inspired from \citep{higgins2017betavae}, at a value of 0.1.
    
\end{itemize}


The RF training approach we use here is the adversarial random forests (ARFs) \citep{watson_arf}. Using this algorithm allows us to train the forest unsupervised, and preserve all features. Other approaches involving unsupervised random forests can also be used, such as the ones proposed in \citet{Shi2006} or \citet{feng_autoencoder}. However, we choose ARF on the basis that our decoders' complexity scale on $d_\Phi$, which is smaller for the ARF than others, because it learns its structure over several iterations of training.

We present the results in Fig. \ref{fig:benchmark} corresponding to the plots in Table \ref{tab:mean_distortion}.

\begin{figure}
    \centering
    \includegraphics[width=0.99\linewidth]{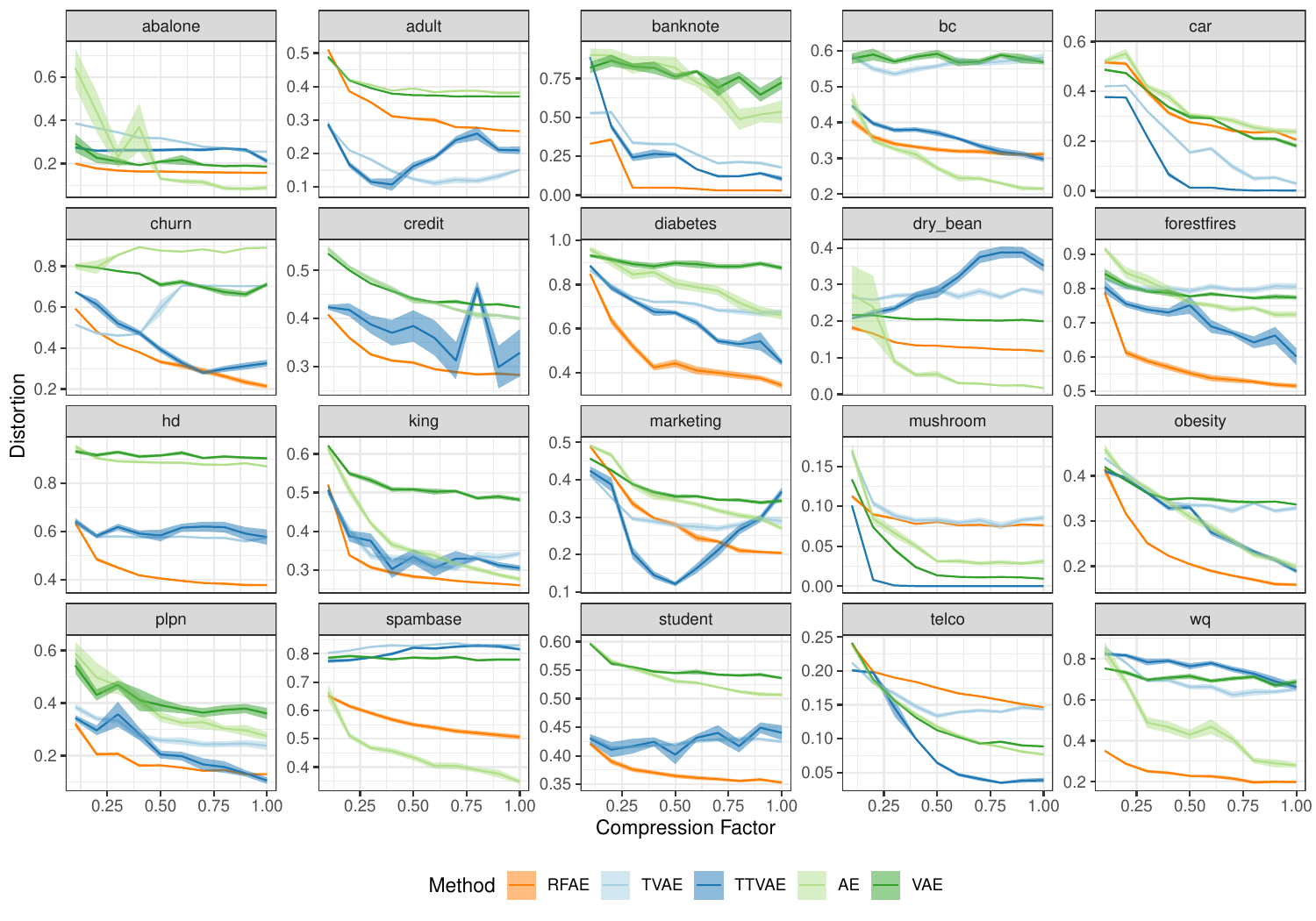}
    \caption{Compression-distortion trade-off on twenty benchmark tabular datasets. Shading represents standard errors across ten bootstraps.}
    \label{fig:benchmark}
\end{figure}

\subsection{Additional results}\label{appx:extra_results}

We present the results of additional experiments comparing different decoding methods, hyperparameter analysis on MNIST, supervised vs. unsupervised RF embeddings as well as runtime experiments.

\paragraph{Decoder Comparison}
We compare the performance of three decoders that we describe in section 4: the kNN, the split relabelling and the LASSO decoder, on a smaller compression / reconstruction benchmark. We follow the same experimental setup as the previous experiment, but only use two datasets of small size (\texttt{credit} \& \texttt{student}), and for each of these, we only use the first 5 bootstrap splits. This is because we only aim to illustrate the performance of these decoders side by side, and also because of the both the relabeling and LASSO's decoder much higher complexity compared to kNN. We maintain the \textbf{RFAE} setup as previously described in Appx. \ref{appx:method}. To make comparison easier, for each iteration we run the encoding stage once, and apply the three decoders to the same embeddings. We describe the relabeling and the LASSO decoder in more detail:
\begin{itemize}
    \item \textbf{Relabeling Decoder:} For the relabeling decoder, we take the original RF object, and go through each split to find a corresponding split with the data on the latent space with the highest simple matching coefficient (SMC) to replace it. This process is global, so all data points are used in each split. 
    
    An alternative approach to this is to re-find splits locally, i.e., only use the data located at each split in the original forest. However, in practice, the lack of training points at high depths will cause the relabeled splits to be inaccurate, so we use the global approach. Once the forest is 'relabeled', we can pass the embeddings for test data through it, and sample by using the leaf bounds of the original forest. 
    
    \item \textbf{LASSO Decoder: } For the LASSO decoder, we follow the derivations in the main text and compute the kernel matrix $\hat{\mathbf K}_0 = \mathbf Z_0 \bm \Lambda \mathbf Z^{\dagger}$, and recover the leaf assignments for test samples via the LASSO \& greedy leaf search algorithm in Appx. \ref{appx:greedy}. Once the leaf assignments are recovered, we once again sample by using the forest's leaf bounds. 

    To reduce complexity, we impose a sparsity on the number of training samples allowed in the LASSO, sorted by the the values of the estimated kernel matrix. Because in most cases, a test sample will only land in the same leaf with a small number of training samples, even across an entire forest, this allows us to narrow the search space significantly with minimal impact on the model performance. For our datasets with 649 and 1000 samples, we set \texttt{sparsity = 100}.
\end{itemize}
We present the results in Fig. \ref{fig:decoder}, where the kNN method is denoted as RFAE.
\vspace{1em}
\begin{figure}[h]
    \centering
    \includegraphics[width=0.5\linewidth]{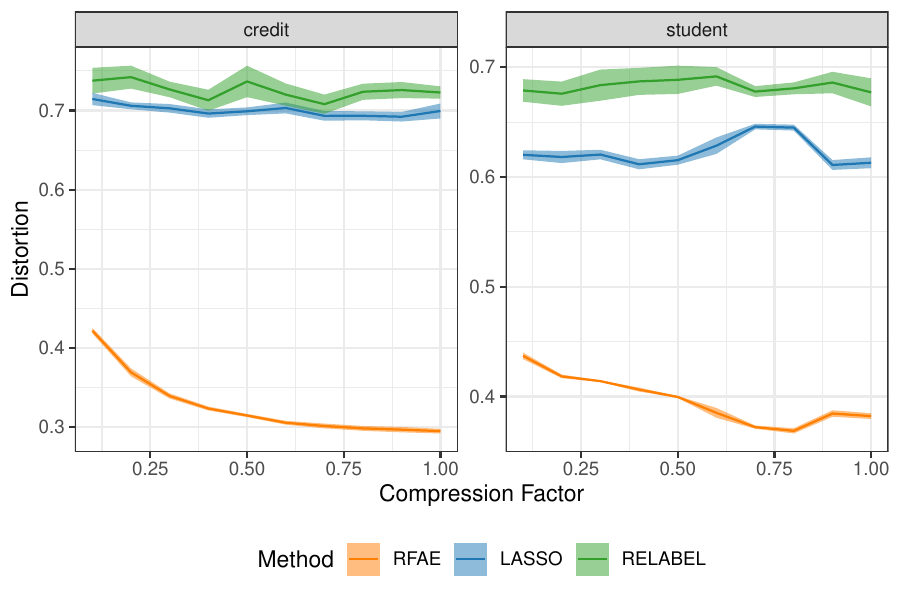}
    \caption{Compression-distortion trade-off on \texttt{student} and \texttt{credit}. Shading represents standard errors across five bootstraps.}
    \label{fig:decoder}
\end{figure}
\vspace{1em}

From this plot, we can see that the kNN decoder dominates performance. Several reasons can explain the poor performance that the other decoders display. For the relabeling decoder, because our only criteria is to find the best matching split, if all candidates are bad, the best matching split does not have to be an objectively good choice. Given the forest's hierarchical nature, inaccuracies can be compounded traversing down the tree, and the final forest is completely dissimilar to the original. The small $d_{\mathcal{Z}}$ also means more variance is introduced into the process.

For the LASSO decoder, several things may have caused this performance. First, the estimation of $\hat{\mathbf K}$ may not be accurate, which is passed on to the LASSO optimization. The LASSO is also an approximate optimization, and variance here can lead to the greedy leaf algorithm to find the wrong leaf assignments for the test samples. This, in combination with the high complexity, motivate us to use the kNN decoder for \textbf{RFAE.}

\vspace{1.5em}
\begin{figure}[htbp]
  \centering
  \begin{subfigure}[b]{0.45\textwidth}
    \includegraphics[width=\textwidth]{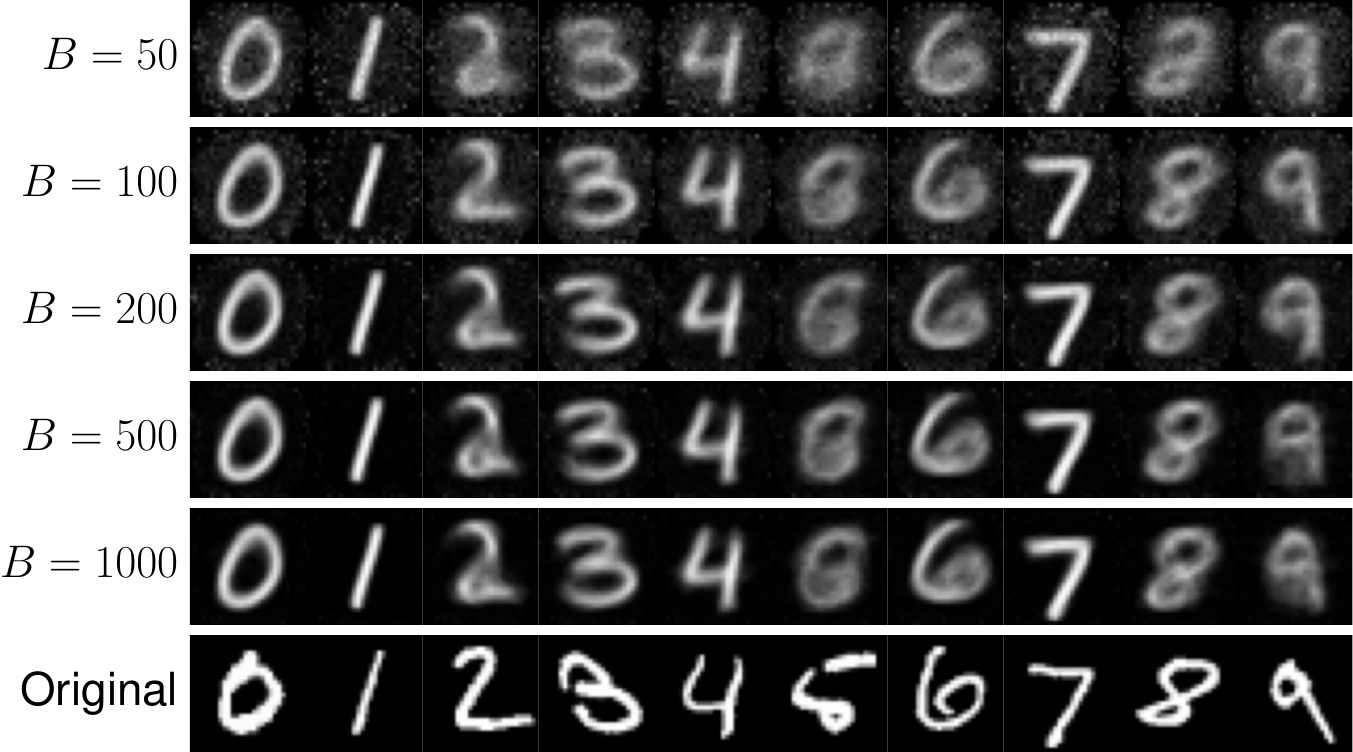}
    \caption{Varying number of decision trees $B$ in random forest.}
    \label{fig:appx_MNIST_B}
  \end{subfigure}
  \hfill
  \begin{subfigure}[b]{0.45\textwidth}
    \includegraphics[width=\textwidth]{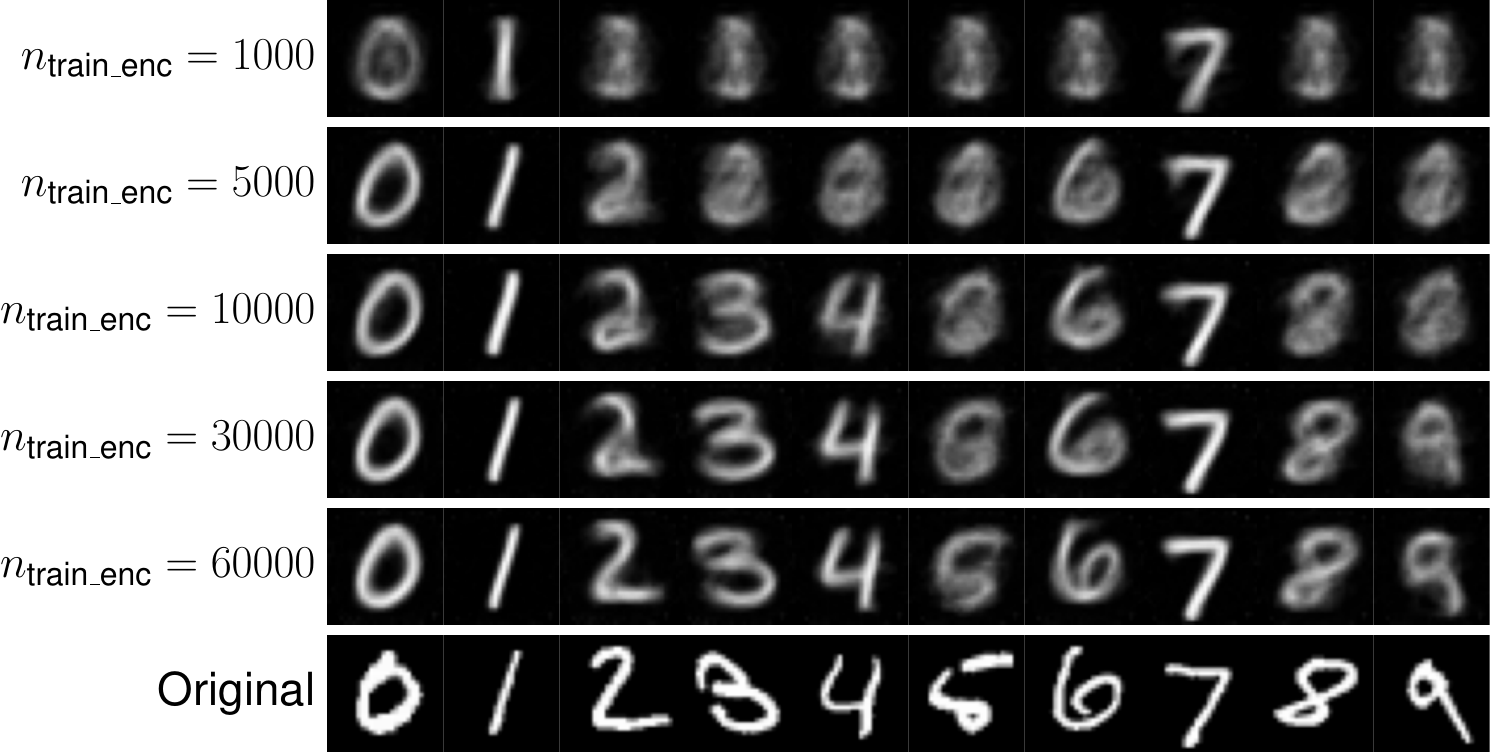}
    \caption{Varying number of training points $n_\text{train\_enc}$ used for encoder training.}
    \label{fig:appx_MNIST_n_train_enc}
  \end{subfigure}

  \vspace{1em} 

  \begin{subfigure}[b]{0.45\textwidth}
    \includegraphics[width=\textwidth]{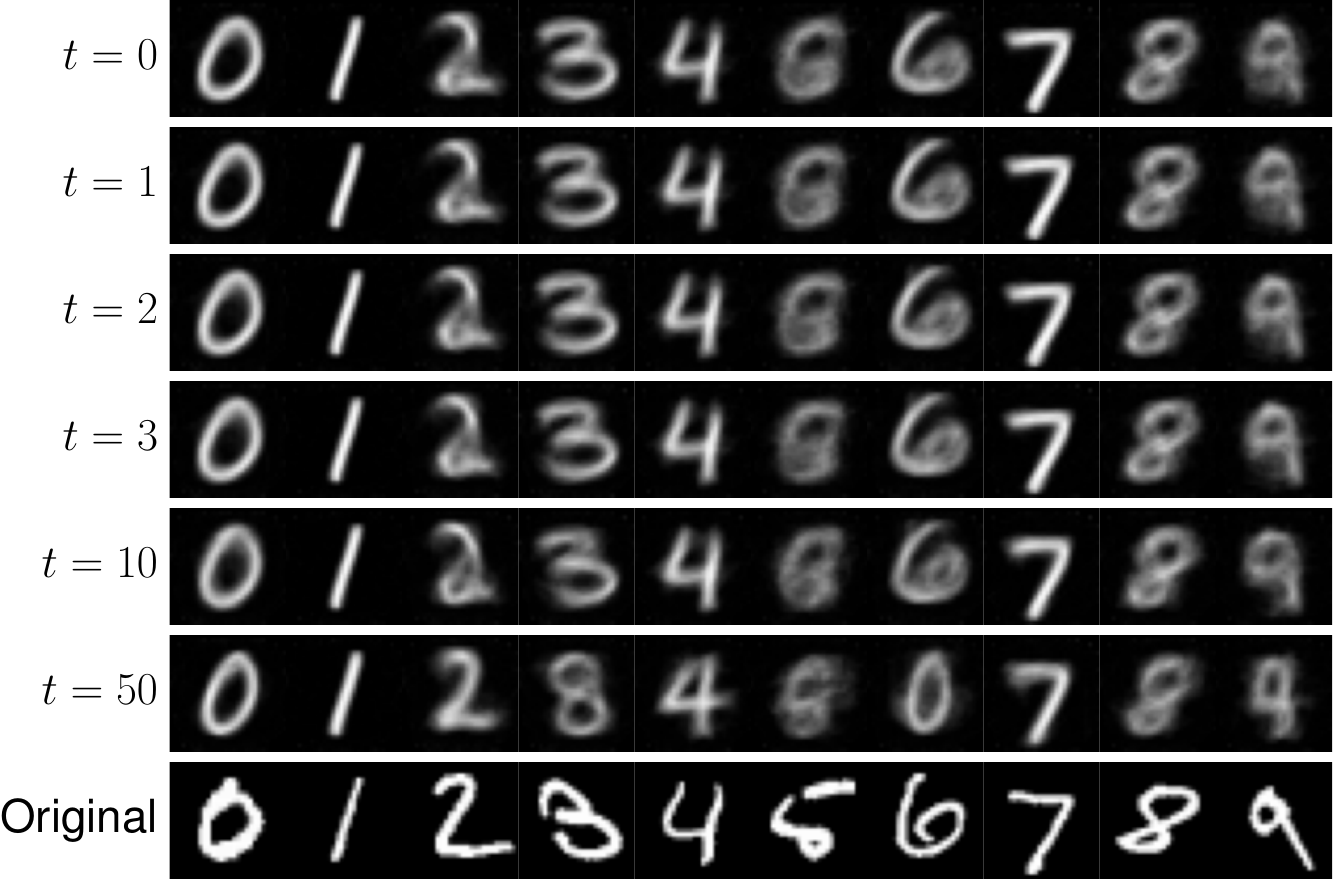}
    \caption{Varying number of time steps $t$ for diffusion map encoding.}
    \label{fig:appx_MNIST_t}
  \end{subfigure}
  \hfill
  \begin{subfigure}[b]{0.45\textwidth}
    \includegraphics[width=\textwidth]{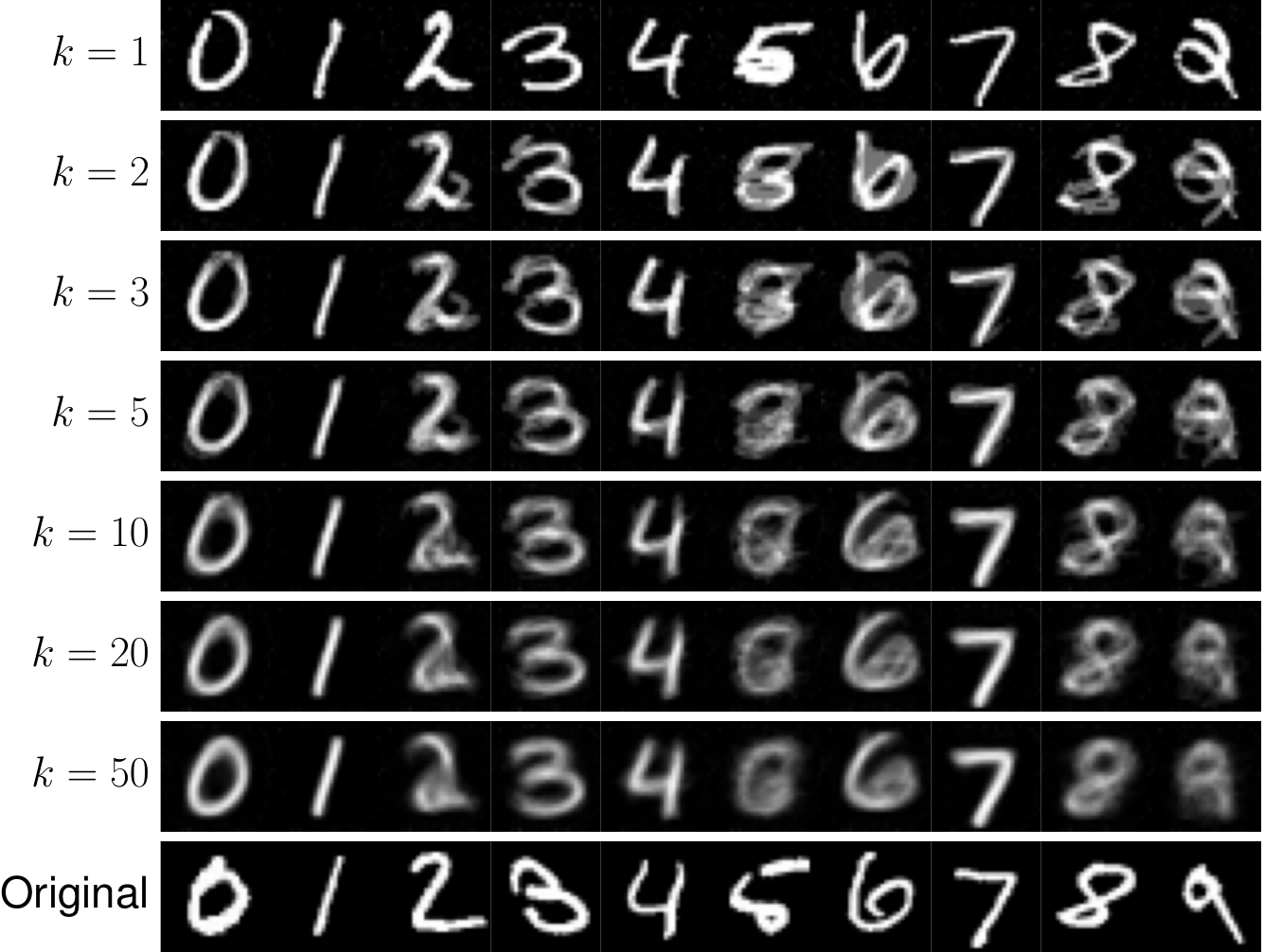}
    \caption{Varying number of nearest neighbors $k$ used for $k$-NN-based decoding.}
    \label{fig:appx_MNIST_k}
  \end{subfigure}
    \vspace{1em}
  \caption{MNIST digit reconstructions produced by RFAE with varying parameter values; original images are displayed in the bottom row.}
  \label{fig:appx_MNIST_reconstruction}
\end{figure}

\vspace{1em}
\paragraph{MNIST reconstruction: hyperparameter analysis}
\label{appx:MNIST_reconstruction}

Fig.~\ref{fig:appx_MNIST_reconstruction} complements Fig.~\ref{fig:MNIST_reconstruction} in Section~\ref{sect:exp} by showing the effects of different parameters on the reconstruction performance for MNIST test digits: the number of trees $B$ (Fig.~\ref{fig:appx_MNIST_B}), the number of samples $n_\text{train\_enc}$ used for encoder training (Fig.~\ref{fig:appx_MNIST_n_train_enc}), the diffusion time step $t$ (Fig.~\ref{fig:appx_MNIST_t}), and the number $k$ of nearest neighbors used for decoding (Fig.~\ref{fig:appx_MNIST_k}). In each subplot, the respective parameter varies in a pre-defined range, while all other ones are kept fixed at $B = 1000$, $n_\text{train\_enc} = 30\,000$, $t = 1$, and $k = 50$; the latent dimension is set to $d_{\mathcal{Z}} = 32$ throughout.

\vspace{1em}
\begin{figure}[htbp]
  \centering
  \includegraphics[width=0.48\textwidth]{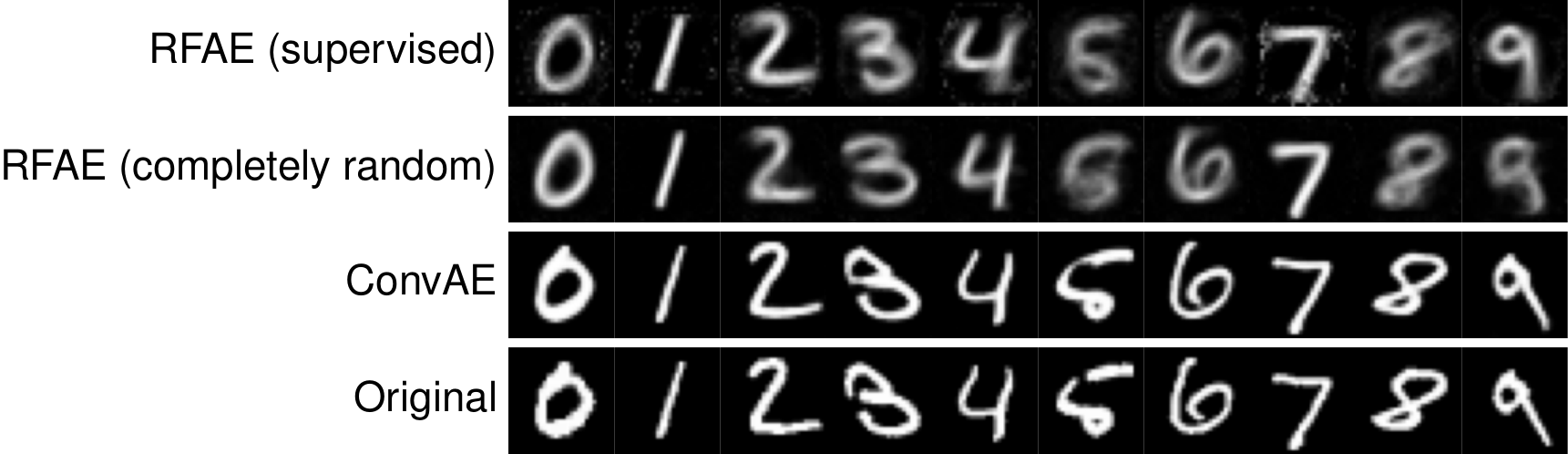}
  \caption{MNIST digit reconstructions produced by RFAE using supervised and copmletely random forests, and by a convolutional autoencoder; original images are displayed in the bottom row.}
  \label{fig:appx_MNIST_methods}
  \vspace{1em}
\end{figure}

Fig.~\ref{fig:appx_MNIST_methods} compares RFAE reconstructions using supervised and (unsupervised) completely random forests with those produced by a convolutional autoencoder with three convolutional layers. All models were trained on full training data and with $d_{\mathcal{Z}} = 32$. 

\paragraph{Supervised vs. unsupervised embeddings}

\begin{wrapfigure}{r}{0.5\textwidth} 
  \centering
  \includegraphics[width=0.48\textwidth]{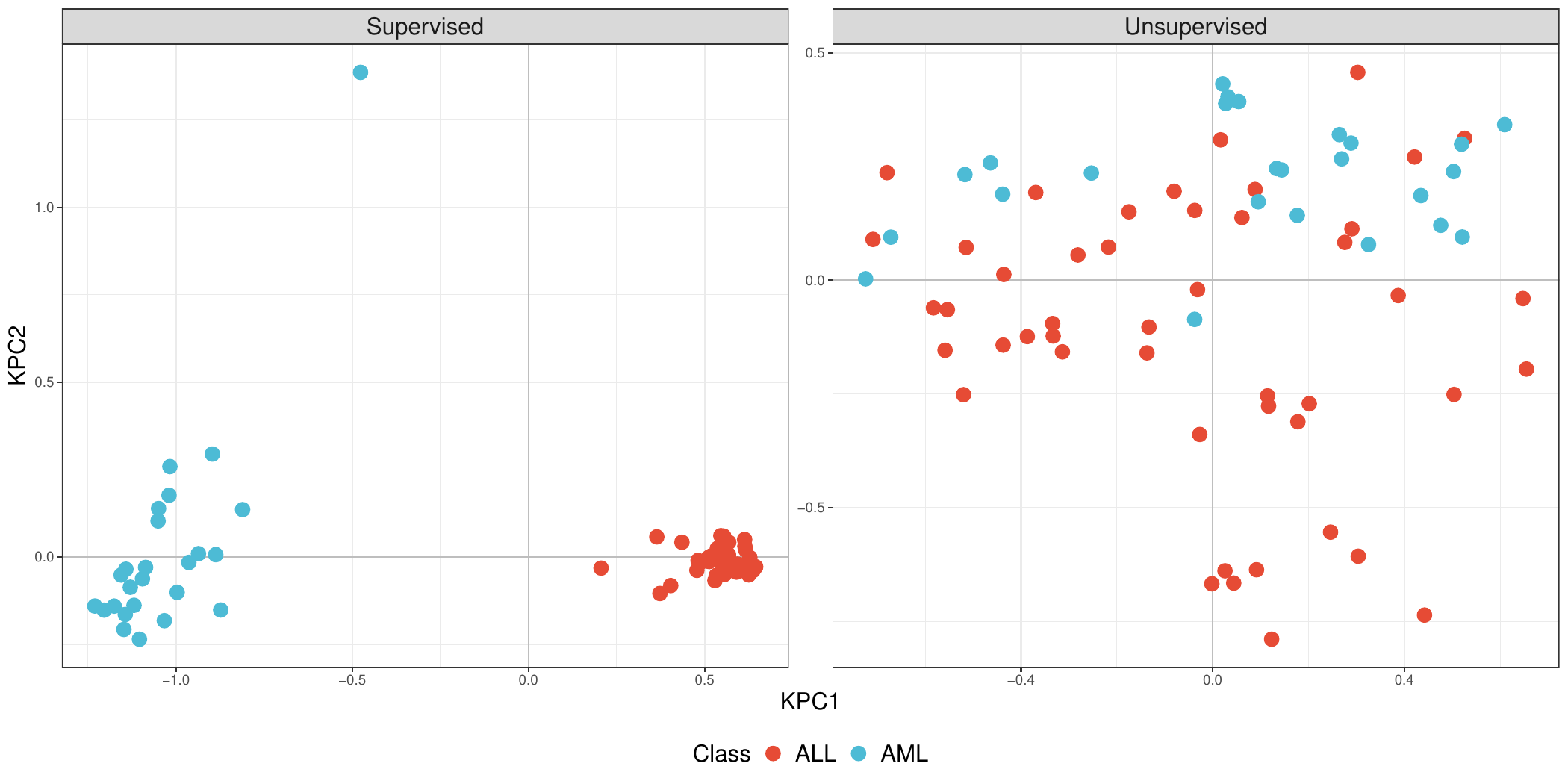}
  \caption{Embeddings can represent supervised or unsupervised RFs. Data from the \citet{golub1999} leukemia transcriptomics study.}
  \label{fig:golub}
  \vspace{1em}
\end{wrapfigure}

RF embeddings provide different perspectives on the data depending on whether we use supervised or unsupervised learning objectives. 
We train a standard RF classifier to distinguish acute lymphoblastic (ALL) from acute myeloid (AML) leukemia samples in a well known transcriptomic dataset \citep{golub1999}. After filtering, the feature matrix includes expression values for $d_\mathcal X = 3572$ genes recorded in $n=72$ patients. These kind of short, wide datasets are common in bioinformatics, but can be challenging for many learning algorithms. RFs excel in these settings, attaining out-of-bag accuracy of 98\% on this task. KPC1 clearly separates the two classes in the left panel, while KPC2 appears to isolate a potential outlier within the AML cohort. Using an unsupervised adversarial RF \citep{watson_arf}, we find far greater overlap among the two classes (as expected), although ALL and AML samples are hardly uniform throughout the latent space. 
This example demonstrates the flexibility of our method. 
Whereas standard spectral embedding algorithms with fixed kernels are limited to unsupervised representations, RFAE can take any RF input.

\paragraph{Runtime Experiments}
Our runtime experiments consists of two tests; one where we compare the runtime of methods within the reconstruction benchmark (see \ref{appx:method} for method and dataset details), and one where we exclusively examine the runtime of RFAE more closely. 
For the first experiment, we conduct a controlled runtime comparison of all five methods (AE, VAE, TVAE, TTVAE and RFAE) on three small-to-medium datasets (plpn, student, credit). These experiments are conducted on a laptop with Intel Core i5-10300H CPU, NVIDIA GeForce GTX 1650 (4GB), and 24GB DDR4 RAM. 
Each method was run 100 times across varying compression ratios, according to the same specifications as in the benchmark. We report the runtimes in Table \ref{tab:runtime_comp}, in seconds. 
\begin{table}[h]
\centering
\caption{Runtime comparison across five models and three datasets.}
\begin{tabular}{lccccc}
\toprule
\textbf{Dataset} & \textbf{AE} & \textbf{VAE} & \textbf{RFAE} & \textbf{TVAE} & \textbf{TTVAE} \\
\midrule
plpn    & 161.6 & 185.9 & 523.8  & 744.2  & 2252.8 \\
student & 190.9 & 227.8 & 1058.6 & 3611.2 & 5148.7 \\
credit  & 223.2 & 277.3 & 1379.5 & 2212.5 & 4046.0 \\
\bottomrule
\end{tabular}
\label{tab:runtime_comp}
\end{table}

Here, the results indicate RFAE's superiority in runtime over the other two state-of-the-art methods in TVAE and TTVAE, while being slower than the naive AE and VAEs - which did poorly in the actual benchmark. 

For the second experiment, we evaluate RFAE alone across 20 datasets, using a fixed compression ratio of 0.2 (10 runs per dataset). These were run on a HPC unit with an AMD Threadripper 3960X (24 cores, 48 threads) CPU and 256GB RAM, with no GPU unit used for the experiment. We report the \textbf{average} runtime for training and inference (as well as the total time), and their corresponding standard errors, ordered by average total runtime, in Table \ref{tab:rfae_runtime}, in seconds. The results suggest RFAE scales well with dataset size.

\begin{table}[t]
\centering
\caption{RFAE runtime performance across datasets (sorted by total mean).}
\setlength{\tabcolsep}{4pt}
\resizebox{\columnwidth}{!}{%
\begin{tabular}{lcccccccc}
\toprule
\textbf{Dataset} & \textbf{Training $\mu$} & \textbf{Training $\sigma$} & \textbf{Inference $\mu$} & \textbf{Inference $\sigma$} & \textbf{Total $\mu$} & \textbf{Total $\sigma$} & \textbf{Samples} & \textbf{\# Features} \\
\midrule
plpn        & 1.89 & 0.00 & 1.35 & 0.01 & 3.24 & 0.03 & 333 & 8 \\
credit      & 2.82 & 0.32 & 3.18 & 0.01 & 6.01 & 0.41 & 1,000 & 21 \\
car         & 2.90 & 0.00 & 2.87 & 0.00 & 5.77 & 0.00 & 1,728 & 7 \\
student     & 3.03 & 0.41 & 3.30 & 0.03 & 6.33 & 0.63 & 649 & 33 \\
diabetes    & 3.16 & 0.01 & 3.97 & 0.01 & 7.13 & 0.01 & 768 & 9 \\
hd          & 4.17 & 0.00 & 5.99 & 0.00 & 10.16 & 0.00 & 298 & 14 \\
forestfires & 4.63 & 0.30 & 6.12 & 0.02 & 10.76 & 0.36 & 517 & 13 \\
bc          & 6.88 & 1.11 & 2.65 & 0.01 & 9.53 & 1.17 & 570 & 32 \\
abalone     & 6.82 & 0.02 & 5.58 & 0.03 & 12.39 & 0.03 & 4,177 & 9 \\
wq          & 7.90 & 0.73 & 6.38 & 0.01 & 14.28 & 0.81 & 4,898 & 12 \\
obesity     & 8.74 & 0.02 & 12.11 & 0.08 & 20.85 & 0.05 & 2,111 & 16 \\
churn       & 12.28 & 0.18 & 11.75 & 0.08 & 24.03 & 0.24 & 10,000 & 11 \\
mushroom    & 13.80 & 0.23 & 11.53 & 0.10 & 25.33 & 0.48 & 8,124 & 22 \\
telco       & 13.96 & 0.01 & 16.45 & 0.05 & 30.41 & 0.09 & 7,032 & 20 \\
spambase    & 29.52 & 0.90 & 27.78 & 0.51 & 57.30 & 1.12 & 4,601 & 59 \\
dry\_bean   & 43.19 & 5.09 & 19.63 & 0.10 & 62.82 & 5.25 & 13,611 & 17 \\
king        & 54.31 & 42.78 & 32.04 & 1.87 & 86.36 & 44.36 & 21,613 & 19 \\
adult       & 150.27 & 304.30 & 89.47 & 24.85 & 239.75 & 469.52 & 45,222 & 14 \\
marketing   & 148.96 & 4831.41 & 91.50 & 374.82 & 240.46 & 7889.97 & 45,211 & 17 \\
\bottomrule
\end{tabular}}
\label{tab:rfae_runtime}
\vspace{-3mm}
\end{table}

\section{Leaf assignments via lasso and greedy search}\label{appx:greedy}

In this section, we briefly describe the lasso relaxation of the ILP in Eq. \ref{eq:decode}.
First, we split the task into $m$ separate subproblems, one for each test vector. 
Let $\hat{\mathbf k}_0$ denote a row of $\hat{\mathbf K}_0$, say for test point $\mathbf x_0$. Observe that this adjacency vector is generally sparse, since most training points are unlikely to be neighbors of $\mathbf x_0$.\footnote{That neighbors vanish as a proportion of training samples is a common consistency condition for nonparametric models in general and local averaging estimators in particular \citep{stone1977}.} Let $a = \lVert \hat{\mathbf k}_0 \rVert_0$ be the number of neighbors for $\mathbf x_0$, and write $\hat{\mathbf k}^\downarrow_0 \in [0,1]^{a}$ for the reduction of $\hat{\mathbf k}_0$ to just its nonzero entries. 
We also write $L^{(b) \downarrow} \subseteq L^{(b)}$ for the set of leaves to which $\mathbf x_0$'s neighbors are routed in tree $b$, with cardinality $d^{(b) \downarrow}_\Phi \leq d^{(b)}_\Phi$, and $d_{\Phi}^\downarrow = \sum_b d^{(b) \downarrow}_\Phi$. (Though the reduction operation is defined only w.r.t. some $\mathbf x_0$, we suppress the dependency to avoid clutter.)
This implies corresponding reductions of $\bm \Phi$ to the submatrix $\bm \Phi^\downarrow \in \{0,1\}^{k \times d_{\Phi}^\downarrow}$, with one row per neighbor of $\mathbf x_0$ and columns for each leaf to which at least one neighbor is routed in $f$; and $\mathbf S$ to $\mathbf S^\downarrow \in [0,1]^{d_{\Phi}^\downarrow \times d_{\Phi}^\downarrow}$, with diagonal entries for each leaf in $\bigcup_{b} L^{(b)^\downarrow}$. 

Following these simplifications, we solve:
\begin{align}\label{eq:decode_relaxed}
    \min_{\bm \psi \in [0,1]^{d_{\Phi}^\downarrow}} \lVert B\hat{\mathbf k}_0^{\downarrow\top} - \bm \Phi^\downarrow \mathbf S^{\downarrow} \bm \psi^{\top} \rVert_2^2 + \lambda \sum_{b \in [B]} \Big( \sum_{\ell \in L^{(b) \downarrow }} \psi_\ell \Big)^2,
\end{align}
where the penalty factor $\lambda$ promotes a sparse solution with a similar effect to the one-hot constraint above.
Specifically, it encourages competition both within trees (via the $L_1$ norm), and between trees (via the $L_2$ norm).
Eq. \ref{eq:decode_relaxed} is an \textit{exclusive lasso} problem, which can be efficiently solved via coordinate descent \citep{zhou_exclusive, campbell_xlasso} or dual Newton methods \citep{lin_efficient_xlasso}. 
The interval (rather than integer) constraints on our decision variables effectively allow for ``fuzzy'' leaf membership, in which samples may receive nonzero weight in multiple leaves of the same tree. 

Exploiting these fuzzy leaf assignments, we propose a greedy method to determine leaf membership.
The method provisionally assigns a sample to the leaves with maximal entries in $\hat{\bm \psi}$. 
If any inconsistencies arise, we replace the ``bad'' assignments---i.e., those less likely to overlap with other assigned regions---with the next best leaves according to $\hat{\bm \psi}$. 
The procedure repeats until assignments are consistent.
Though convergence is guaranteed, greedy search may prove inaccurate if coefficients are poorly estimated. 


Alg. \ref{alg:greedy} provides an overview of the greedy leaf assignment procedure. 
This is run for a single sample $\mathbf x \in \mathcal X$, which has associated fuzzy leaf assignment vector $\hat{\mathbf p} \in [0,1]^{d_\Phi}$. We select the leaf coordinates associated with tree $b \in [B]$ by writing $\hat{\mathbf p}^{(b)} \in [0,1]^{d_\Phi^{(b)}}$.
We overload notation somewhat by writing $R_i^{(b)} \subset \mathcal X$ to denote the hyperrectangular region associated with leaf $i \in [d_\Phi^{(b)}]$ of tree $b \in [B]$ in step 5; and then $R_q^{(b)}(t)$ to denote the region associated with leaf $q^{(b)}(t)$, which maximizes $\hat{\mathbf p}^{(b)}$ among all leaves that intersect with the feasible region $S(t)$.
If the $\argmax$ in step 5 is not unique, then we select among the maximizing arguments at random. Similarly, if there are multiple non-overlapping maximal cliques, then we select one at random in step 14. (This should be exceedingly rare in sufficiently large forests.) 
The undirected graph $\mathcal G(t)$ encodes whether the regions associated with assigned leaves overlap. Since consistent leaf assignments require intersecting regions for all trees, the algorithm terminates only when $\mathcal G(t)$ is complete.
The procedure is greedy in the sense that edges can only be added and never removed, leading to larger maximal cliques $C(t)$ and smaller feasible regions $S(t)$ with each passing round.

While the algorithm is guaranteed to converge on a consistent set of leaf assignments, this may only be achievable by random sampling of leaves under a consistency constraint. 
Such uninformative results are likely the product of noisy estimates for $\hat{\mathbf K}_0$. 
Note that while the maximal clique problem is famously NP-complete \citep{Karp1972}, we must solve this combinatorial task at most once (at $t=1$). 
In subsequent rounds $t \geq 2$, we simply check whether any new nodes have been added to $C(t-1)$.
Maximal clique solvers are highly optimized and have worst-case complexity $\mathcal O(2^{B/3})$, but are often far more efficient due to clever heuristics \citep{wu2015}.
In practice, the method tends to find reasonable leaf assignments in just a few rounds. When computing leaf assignments for multiple test points, we may run Alg. \ref{alg:greedy} in parallel, as solutions are independent for each sample.

\begin{algorithm}[tb] \small
\caption{\textsc{GreedyLeafAssignments}}
\label{alg:greedy}
\textbf{Input}: Fuzzy leaf assignments $\hat{\mathbf p} \in [0,1]^{d_\Phi}$\\
\textbf{Output}: Hard leaf assignments $\mathbf q \in \{1, \dots, d_\Phi^{(b)} \}^B$
\begin{algorithmic}[1] 
\STATE Initialize: $t \leftarrow 0$, $C(t) \leftarrow \emptyset$, $S(t) \leftarrow \mathcal X$, $\texttt{converged} \leftarrow \texttt{FALSE}$
\WHILE{\textbf{not} \texttt{converged}}
\STATE $t \leftarrow t + 1$
\FOR{all trees $b \in [B]$}
\STATE Find leaf with maximum fuzzy value in the feasible region:
\begin{align*}
    q^{(b)}(t) \leftarrow \argmax_{i \in [d_\Phi^{(b)}]} ~\hat{p}_i^{(b)} \quad ~\text{s.t.} ~~R_i^{(b)} \cap S(t) \neq \emptyset
\end{align*}
\STATE Let $R_q^{(b)}(t)$ be the region corresponding to leaf $q^{(b)}(t)$
\ENDFOR
\STATE Let $\mathcal G(t) = \langle [B], \mathcal E(t) \rangle$ be a graph with edges:
\begin{align*}
    \mathcal E(t) := \{i,j \in [B]: R_q^{(i)}(t) \cap R_q^{(j)}(t) \neq \emptyset \}
\end{align*}
\IF{$\mathcal G(t)$ is complete}
\STATE $\texttt{converged} \leftarrow \texttt{TRUE}$
\ELSE 
\IF{the maximal clique of $\mathcal G(t)$ is unique}
\STATE Let $C(t) \subset [B]$ be the unique maximal clique of $\mathcal G(t)$
\ELSIF{the maximal cliques of $\mathcal G(t)$ have nonempty intersection}
\STATE Let $C(t) \subset [B]$ be the intersection of all maximal cliques of $\mathcal G(t)$
\ELSE
\STATE Let $C(t) \subset [B]$ be any maximal clique of $\mathcal G(t)$
\ENDIF
\STATE Let $S(t) \leftarrow \bigcap_{b \in C(t)} R^{(b)}_q(t)$ be the feasible region associated with $C(t)$
\ENDIF
\ENDWHILE
\STATE $\mathbf q \leftarrow \mathbf q(t)$
\end{algorithmic}
\end{algorithm}

\section{Computational complexity}\label{appx:complexity}

In this section, we study the complexity of different RFAE pipelines. 

The encoding step requires $\mathcal O(n^2)$ space to store the adjacency matrix. For large datasets, this can be approximated by using a subset of the training data to compute $\mathbf K$, at the cost of losing the exact functional equivalence of Eq. \ref{eq:rf_formula}. Similar tricks are common in the kernel methods literature \citep{williams2000using, zhang2008, bach2013}.

While the ILP solution is generally intractable, the lasso relaxation requires $\mathcal O(d^3_\Phi)$ operations \citep{campbell_xlasso} to score leaves (although $d_\Phi$ can in fact be reduced to $a_i = \lVert \mathbf k_i \rVert_0$ for each test point $i \in [m]$; see Appx. \ref{appx:greedy}). 
The subsequent greedy leaf assignment algorithm searches for the maximal clique in a graph with $B$ nodes for each test sample, incurring worst-case complexity $\mathcal O\big(\exp (B)\big)$. 
Though the maximal clique problem is famously NP-complete \citep{Karp1972}, modern solvers often execute very quickly due to clever heuristics \citep{wu2015}. We can also parallelize over the test samples, as each greedy solution is independent. 
Note that both the cubic term in the exclusive lasso task and the exponential term in the greedy leaf assignment algorithm are user-controlled parameters that can be reduced by training a forest with fewer and/or shallower trees. 
Alternatively, we could use just a subset of the RF's trees to reduce the time complexity of decoding via constrained optimization, much like we could use a subset of training samples to reduce the space complexity of encoding. 

The split relabeling method is essentially an iterated CART algorithm, and therefore requires $\mathcal O(d_{\mathcal{Z}} \tilde n \log \tilde n)$ time to relabel each split, where $\tilde n$ is the number of synthetic samples $\tilde{\mathbf X}$.
However, since the number of splits is generally exponential in tree depth, this can still be challenging for deep forests.

The most efficient decoder is the $k$-NN method, which proceeds in two steps. First, we find the nearest neighbors in embedding space, which for reasonably small $d_{\mathcal{Z}}$ can be completed in $\mathcal O(m \log n)$ time using kd-trees. 
Next, we compute the intersection of all leaf regions for selected neighbors, incurring a cost of $\mathcal O(km d_\Phi d_\mathcal X)$. 



\newpage

\end{document}